\documentclass[accepted]{uai2023} % after acceptance, for a revised
% version; also before submission to
% see how the non-anonymous paper
% would look like
%% There is a class option to choose the math font
% \documentclass[mathfont=ptmx]{uai2023} % ptmx math instead of Computer
% Modern (has noticable issues)
% \documentclass[mathfont=newtx]{uai2023} % newtx fonts (improves upon
% ptmx; less tested, no support)
% NOTE: Only keep *one* line above as appropriate, as it will be replaced
%       automatically for papers to be published. Do not make any other
%       change above this note for an accepted version.

%% Choose your variant of English; be consistent
\usepackage[american]{babel}
% \usepackage[british]{babel}

%% Some suggested packages, as needed:
\usepackage{natbib} % has a nice set of citation styles and commands
\bibliographystyle{plainnat}

\usepackage{amsmath,amssymb,amsfonts,amsthm}
\usepackage{algorithm,algorithmic}
\usepackage{mathtools, bbm}
\usepackage[mathscr]{euscript}
\usepackage{dsfont}
\usepackage{booktabs,multirow}
\usepackage{nicefrac}
\usepackage{subfig}
\newtheorem{definition}{Definition}
\newtheorem{proposition}{Proposition}
\newtheorem{lemma}{Lemma}
\newtheorem{assumption}{}

\newtheorem{theorem}{Theorem}
\newtheorem{corollary}{Corollary}
\newtheorem{remark}{Remark}
\newcommand{\R}{\mathbb{R}}
\newcommand{\E}{\mathbb{E}}

\newcommand{\p}{\mathbb{P}}
\newcommand{\1}{\mathds{1}}
\allowdisplaybreaks
%% Provided macros
% \smaller: Because the class footnote size is essentially LaTeX's \small,
%           redefining \footnotesize, we provide the original \footnotesize
%           using this macro.
%           (Use only sparingly, e.g., in drawings, as it is quite small.)

%% Self-defined macros
%\newcommand{\swap}[3][-]{#3#1#2} % just an example

\title{A policy gradient approach for optimization of smooth risk measures}

%
% Add authors
%\author[1]{\href{mailto:<jj@example.edu>?Subject=Your UAI 2023 paper}{Jane~J.~von~O'L\'opez}{}}
\author[1]{Nithia Vijayan}
\author[1]{Prashanth L.A.}
% Add affiliations after the authors
\affil[1]{%
    Department of Computer Science and Engineering,
    Indian Institute of Technology Madras, India.
}

\begin{document}
\maketitle
\begin{abstract}
We propose policy gradient algorithms for solving a risk-sensitive reinforcement learning (RL) problem in on-policy as well as off-policy settings. We consider episodic Markov decision processes, and model the risk using the broad class of smooth risk measures of the cumulative discounted reward. We propose two template policy gradient algorithms that optimize a smooth risk measure in on-policy and off-policy RL settings, respectively. We derive non-asymptotic bounds that  quantify the rate of convergence of our proposed algorithms to a stationary point of the smooth risk measure. As special cases, we establish that our algorithms apply to optimization of mean-variance and distortion risk measures, respectively.
\end{abstract}
%%%%%%%%%%%%%%%%%%%%%%%%%%%%%%%%%%%%%%%%%%%%%%%%%%%%%%%%%%%%%%%%%%%%%%%%%%%%%%%%%%%
\section{Introduction}
\label{sec:intro}
Risk-sensitive reinforcement learning (RL) has received a lot of attention recently in the literature, and a few representative works are  \cite{tamar2012policy,prashanth2014cvar,tamar2015coherent,borkar2010risk,chow2017risk,prashanth2016mlj,borkar2010learning,prashla16,huang2017risk}. Mean-variance tradeoff \cite{markowitz1952portfolio}, value at risk (VaR), conditional value at risk (CVaR) \cite{rockafellar2000optimization}, spectral risk measure \cite{acerbi2002spectral}, distortion risk measure \cite{denneberg1990}, a risk measure based on cumulative prospect theory (CPT) \cite{tversky1992advances} are some of the popular risk measures considered in the literature.

Policy gradients form a popular solution approach for traditional risk-neutral RL. The idea here is to consider a parameterized set of policies, usually in a continuous space, and perform a random search using stochastic gradient ascent to find a `good-enough' policy that optimizes a certain performance criterion. Several risk-sensitive RL algorithms employ this approach to find policies that are risk-optimal, see \cite{prashla2021} for a detailed survey of some of the recent developments in this research direction.

In this paper, we consider the problem of optimizing an abstract smooth risk measure (SRM) in a risk-sensitive RL context. SRMs constitute a broad class of risk measures that includes mean-variance risk measure (MVRM) and distortion risk measure (DRM). Mean-variance tradeoff is a well-known risk measure that is closely related to exponential cost risk measure -- a connection that can be seen using a Taylor series expansion (cf. \cite{prashanth2016mlj}). Next, DRM is an expectation w.r.t. a distorted distribution that is arrived at using a distortion function that alters the underlying cumulative distribution function (CDF). Popular risk measures like VaR and CVaR can be seen as special cases of DRM using appropriate distortion functions. However, VaR is not a popular objective for risk-sensitive optimization since it is not coherent\footnote{A risk measure is said to be coherent if it is translation invariant, sub-additive, positive homogeneous, and monotonic \cite{artzner99}.}, while CVaR, though coherent, is not preferable, as it considers all rewards below VaR equally, while ignoring all those beyond VaR. A DRM is preferable as it prioritizes all rewards appropriately, rather than assigning equal weight or selectively focusing on a fraction using a tail-based risk measure like CVaR.

We employ the policy gradient approach for solving a risk-sensitive Markov decision process (MDP), with an SRM as the objective. The goal in our formulation is to find a policy that maximizes the SRM of the cumulative reward in an episodic MDP. We propose a template policy gradient algorithm to solve this problem for an abstract SRM. The template algorithm has the following  crucial components: a risk estimation scheme and a gradient estimation scheme. The risk estimation scheme for an abstract SRM is assumed to guarantee a $O(\nicefrac{1}{m})$ mean-square error (MSE), where $m$ is the number of episodes. With an expected value objective in a risk-neutral setting, this MSE requirement is natural. For the case of MVRM and DRMs, we manifest such a bound for natural estimators. We would like to add that, unlike expected value where a sample mean was a good estimator, estimating a DRM is more challenging since the episodes are obtained using the CDF of the cumulative reward, while DRM is an expectation with a distorted distribution implying an estimate of the underlying CDF is necessary, or a sample mean is not sufficient for DRM estimation.

For the purpose of gradient estimation, we employ the smoothed functional (SF) approach. This scheme falls under the realm of simultaneous perturbation methods \cite{shalabh_book}, which estimate the gradient of a function given noisy observations. Simultaneous perturbation methods in general, and SF methods in particular, are efficient and easy to implement as they require only two function measurements for estimating the gradient, irrespective of the parameter dimension. The choice of the SF scheme for estimating the gradient of an abstract SRM is not arbitrary. For some risk measures, it is not possible to employ the likelihood ratio method to arrive at a policy gradient theorem. This is true for the mean variance risk measure, as shown in \cite{prashanth2016mlj}. This is also unlike the classic expected value objective, for which one could use the policy gradient theorem to arrive at a gradient estimation scheme based on the likelihood ratio method.

We now summarize our contributions. First, we propose two template policy gradient algorithms with an SRM as the objective. The first algorithm operates in an on-policy RL setting, while the second caters to the off-policy RL setting. Second, we derive non-asymptotic bounds that quantify the rate of convergence of our proposed algorithms to a stationary point of an SRM. As special cases, we establish that our algorithms and associated theoretical guarantees apply to optimization of mean-variance and distortion risk measures, respectively, in a risk-sensitive RL context. To the best of our knowledge, policy gradient algorithm with non-asymptotic convergence guarantees are not available in the literature for SRMs in general, and for the special cases of mean-variance risk measure and DRMs in particular. Our non-asymptotic bound for the template algorithm can be used as a blackbox to characterize the convergence rate for SRMs beyond mean-variance and DRM. In particular, one can arrive at a $O(1/\epsilon^{2})$ bound on the number of iterations for convergence to an $\epsilon$-stationary point of the SRM, provided one verifies the necessary assumptions that guarantee smoothness of SRM and a MSE bound on the SRM estimators.

\textbf{Related work.}
In \cite{tamar2015}, the authors propose a policy gradient algorithm for an abstract coherent risk measure, and derive a policy gradient theorem using the dual representation of a coherent risk measure. Their estimation scheme requires solving a convex optimization problem. Also, they establish asymptotic consistency of their proposed gradient estimate. In contrast, our estimation scheme is computationally inexpensive, and our theoretical guarantees are non-asymptotic in nature.
In \cite{prashla2021}, the authors survey policy gradient algorithms for optimizing different risk measures in a constrained as well as an unconstrained RL setting.
They provide a non-asymptotic bound of $O(1/N^{1/3})$ for an abstract smooth risk measure, assuming a gradient oracle that satisfies certain bias-variance conditions. In contrast, we provide concrete gradient estimation schemes in a risk-sensitive RL setting, and more importantly, we derive an improved non-asymptotic bound of order $O(1/\sqrt{N})$.
In \cite{prashla16} the authors consider a CPT-based objective in an RL setting, and they employ simultaneous perturbation stochastic approximation (SPSA) method for the gradient estimation, and provide asymptotic convergence guarantees for their algorithm. The optimization of a DRM is closely related to that of CPT. Under general conditions on the policy parameterization, which are usually employed in the analysis of policy gradient algorithms, we show that DRM is smooth, in turn leading a non-asymptotic bound of $O(1/\sqrt{N})$. This is unlike \cite{prashla16}, where the authors provide asymptotic guarantees assuming the policy parameterization ensures that the CPT-value is three times continuously differentiable --- a condition that is hard to verify in practice.
In a non-RL context, the authors in \cite{glynn21} study the sensitivity of DRM using an estimator that is based on the generalized likelihood ratio method, and establish a central limit theorem for their gradient estimator. In \cite{holland22}, the authors analyze the optimization of spectral risk measures in an empirical risk minimization framework that assumes convex losses.

This paper is an extended version of an earlier work (see \cite{nv23}). Although the order of the convergence bounds remains the same, this version corrects errors from the earlier work by modifying an assumption and revising the proof.

The rest of the paper is organized as follows: Section \ref{sec:prelims} provides the preliminaries for a risk-sensitive episodic problem. Section \ref{sec:sf} introduces our proposed policy gradient template for smooth risk measures. Section \ref{sec:main} presents the non-asymptotic bounds for our proposed algorithms. Section \ref{sec:app} outlines the application of our algorithms to two prominent examples of SRM, namely, DRM and MVRM. Finally, Section \ref{sec:conclusions} provides the concluding remarks.
%%%%%%%%%%%%%%%%%%%%%%%%%%%%%%%%%%%%%%%%%%%%%%%%%%%%%%%%%%%%%%%%%%%%%%%%%%%%%%%%%%%%
\section{Preliminaries}
\label{sec:prelims}
We consider an MDP with a state space $\mathscr{S}$ and an action space $\mathscr{A}$. We assume that $\mathscr{S}$ and $\mathscr{A}$ are finite spaces. Let $r:\mathscr{S}\times\mathscr{A}\times\mathscr{S}\to [-r_{\textrm{max}},r_{\textrm{max}}], r_{\textrm{max}}\in\mathbb{R}^{+}$ be the single stage scalar reward, and $p:\mathscr{S}\times\mathscr{S}\times\mathscr{A} \to [0,1]$ be the transition probability function. The actions are selected using parameterized stochastic policies $\{\pi_\theta:\mathscr{S}\times\mathscr{A}\times\mathbb{R}^d\to[0,1],\theta\in\mathbb{R}^d\}$.

We consider episodic problems, where each episode starts at a fixed state $S_0$, and terminates at a special absorbing state $0$. We denote by $S_t$ and $A_t$, the state and action at time $t\in\{0,1,\cdots\}$ respectively. We consider an episodic MDP (cf. Chapter 3, \cite{sutton_book}), where the agent interacts with the environment in discrete episodes, each comprising a finite number of steps.
Ensuring a finite number of steps necessitates the establishment of termination conditions to indicate the conclusion of an episode within the framework of the MDP design. These conditions involve reaching a designated terminal state $0$, or reaching the maximum allowable number of steps per episode. Whether the maximum episode length is attained or not, the episode concludes upon reaching the terminal state, and vice versa.

Incorporating a maximum allowable episode length into the MDP design ensures that episodes conclude within a finite number of steps, while still allowing for variable episode lengths. This flexibility arises from the possibility of reaching the terminal state before reaching the maximum allowable number of steps. Users have the flexibility to set the maximum allowable number of steps according to the problem at hand, a practice frequently observed in OpenAI Gym environments \cite{openai}. Establishing a maximum episode length in real-time scenarios yields various benefits, facilitating the system's learning or decision making within a reasonable timeframe and thereby enhancing overall efficiency in real-time applications. Moreover, it assists in ensuring timely task completion by preventing the system from getting stuck in unproductive states or decision loops.

Let $T$ denote the random length of an episode. Within the MDP framework, we define the maximum episode length, tailored to the problem at hand, denoted as $M_e>0$, ensuring that $T \leq M_e$ a.s., i.e.,
    \begin{align}
        \label{eq:M_e}
        \exists M_e >0 : T \leq M_e < \infty \textrm{ a.s}.
    \end{align}

We make the following assumption on the parameterized policies $\{\pi_\theta,\theta\in\mathbb{R}^d\}$:
\begin{assumption}
    \label{as:nabla_logpi}
    $\exists M_{d},M_{h}>0: \forall \theta\in \R^d, \forall a\in \mathscr{A}, s \in \mathscr{S}$, $\left\lVert\nabla\log \pi_{\theta}(a\mid s)\right\lVert\leq M_d$, and $\left\lVert\nabla^2\log \pi_{\theta}(a\mid s)\right\lVert\leq M_h$,
    where $\lVert\cdot\rVert$ is the $d$-dimensional Euclidean norm when the operand is a vector, and the operator norm when the operand is a matrix.
\end{assumption}

An assumption like \ref{as:nabla_logpi} is common for analyzing policy gradient algorithms (cf. \cite{zhangK2020,papini2018}). To illustrate the plausibility of \ref{as:nabla_logpi}, let us examine a policy that follows Gibbs distribution, i.e., $\pi_\theta(a|s)=\nicefrac{exp(h(s,a,\theta))}{\sum_{b\in\mathscr{A}}exp(h(s,b,\theta))}$, where $h:\mathscr{S}\times \mathscr{A}\times\R^d\to\R$ is a user defined function.
We can see that,
\begin{align*}
    &\nabla \log \pi_\theta(a|s)=\nabla h(s,a,\theta) - \sum_{b\in\mathscr{A}}\pi_\theta(b|s)\nabla h(s,b,\theta);\\
    &\nabla^2 \log \pi_\theta(a|s)=\nabla^2 h(s,a,\theta)\\
    & + \left(\sum_{b\in\mathscr{A}}\pi_\theta(b|s) \nabla h(s,b,\theta)\right)\!\left(\sum_{b\in\mathscr{A}}\pi_\theta(b|s) \nabla h(s,b,\theta)\right)^\top\\
    &-\sum_{b\in\mathscr{A}}\pi_\theta(b|s)\left(\nabla^2 h(s,b,\theta)+\nabla h(s,b,\theta)\nabla h(s,b,\theta)^\top\right).
\end{align*}
If we choose linear policy class, i.e., $h(s,a,\theta)=\phi(s,a)^\top\theta$, with bounded features, i.e., $\left\lVert\phi(s,a)\right\rVert \leq M$, then
$\left\lVert \nabla \log \pi_\theta(a|s)\right\rVert\leq\lvert\mathscr{A}\rvert M $, and
$\left\lVert\nabla^2 \log \pi_\theta(a|s)\right\rVert \leq   \lvert \mathscr{A}\rvert M^2+\lvert \mathscr{A}\rvert^2  M^2$. Since we consider finite state-action spaces, it is easy to arrive at constants $M_d$ and $M_h$ that ensure \ref{as:nabla_logpi} holds.

We denote by $S_t$ and $A_t$, the state and the action at time $t\in\{0,1,\cdots\}$ respectively. The cumulative discounted reward $R^\theta$, which is a random variable, is defined as follows:
\begin{align}
    R^\theta=\sum\limits_{t=0}^{T-1}\gamma^t r(S_t,A_t,S_{t+1}),  \forall \theta \in \R^d,
    \label{eq:Rtheta}
\end{align}
where $A_t \sim \pi_\theta(\cdot, S_t)$, $S_{t+1}\sim p(\cdot,S_t,A_t)$, $\gamma \in (0,1)$ is the discount factor, and $T$ is the random length of an episode. Note that, $\forall \theta \in \mathbb{R}^d, \lvert R^\theta \rvert \leq r_{\textrm{max}}(\nicefrac{1-\gamma^{M_e}}{1-\gamma})=M_r$ a.s.

On-policy learning is a scheme where a policy parameter $\theta$ is optimized using the data collected by the same policy $\pi_\theta$. In contrast, off-policy learning is a scheme where we optimize $\theta$ using data collected by a different behavior policy $b$.

In an off-policy setting, we collect episodes from $b$ and estimate the values of $\pi_\theta$, using importance sampling ratios.
We assume that the target policy $\pi_\theta$ is absolutely continuous w.r.t. the behavior policy $b$, i.e.,
\begin{assumption}
    \label{as:b_pol}
    $\forall \theta \in \!\R^d, b(a | s) \!=\!0 \Rightarrow \pi_\theta(a | s)\!=\!0,\forall a \in \mathscr{A}, \forall s \in \mathscr{S}$.
\end{assumption}
Assumption \ref{as:b_pol} is standard in an off-policy RL setting (cf. \cite{sutton08}).

The cumulative discounted reward $R^b$, which is a random variable, is defined as follows:
\begin{align}
    \label{eq:Rb}
    R^b=\sum\limits_{t=0}^{T-1}\gamma^t r(S_t,A_t,S_{t+1}),
\end{align}
where $A_t \sim b(\cdot, S_t)$, $S_{t+1}\sim p(\cdot,S_t,A_t)$, $\gamma \in (0,1)$, and $T$ is the random length of an episode.

The importance sampling ratio $\psi^\theta$ is defined by
\begin{align}
    \label{eq:psi}
    \psi^\theta = \prod\limits_{t=0}^{T-1}\frac{\pi_{\theta}(A_t\mid S_t)}{b(A_t\mid S_t)}.
\end{align}
From \ref{as:nabla_logpi} and \ref{as:b_pol}, we obtain $\forall \theta \in \R^d,\pi_{\theta}(a|s)>0$ and $b(a|s) >0$, $\forall a \in \mathscr{A}, \textrm{ and } \forall s \in \mathscr{S}$. This fact in conjunction with \eqref{eq:M_e} implies the following bound for $\psi^\theta$:
\begin{align}
    \label{eq:is_ratio}
    \exists M_s>0 : \forall \theta\in\R^d, \psi^\theta \leq M_s, \textrm{ a.s}.
\end{align}

The cumulative discounted reward is a random variable as there is randomness in state transition as modeled by the transition probability function as well as in the action selection in the case of stochastic policies. We consider a smooth risk measure $\rho$ as an objective function, which provides a numerical value that represents certain aspects of this random variable.
\begin{definition}
    A risk measure is smooth if it satisfies the following condition: There exists a positive constant $L_{\rho'}$ such that,
    \begin{align}
       \forall \theta_1, \theta_2 \in \R^d, \left\lVert \nabla\rho(\theta_1)-\nabla\rho(\theta_2)\right\rVert \leq L_{\rho'} \left\lVert \theta_1 \!-\! \theta_2 \right\rVert.
    \end{align}
\end{definition}
Under relatively general conditions, DRM and MVRM can be considered as instances of smooth risk measures. We establish this fact in Section \ref{sec:app}.

Our goal is to find a policy parameter $\theta^*$ that maximizes the objective function $\rho$, i.e,
\begin{align}
    \label{eq:max_theta}
    \theta^*\in\textrm{arg}\!\max_{\theta \in \mathbb{R}^d}  \rho(\theta).
\end{align}
% %%%%%%%%%%%%%%%%%%%%%%%%%%%%%%%%%%%%%%%%%%%%%%%%%%%%%%%%%%%%%%%%%%%%%%%%%%%%%%%%%%%
\section{Policy gradient template}
\label{sec:sf}
We propose two policy gradient algorithms for optimizing a smooth risk measure. The first algorithm operates in an on-policy RL setting, and Algorithm \ref{alg:onP} presents the pseudocode. The second algorithm caters to an off-policy RL setting, with a pseudocode that follows the template in Algorithm \ref{alg:onP} with variations in estimation.
There are two crucial ingredients in each of these policy gradient algorithms:
\begin{enumerate}
    \item Risk estimation: This refers to the problem of estimating the value of a smooth risk measure for a given policy parameter, say $\theta$. In an on-policy setting, the estimation scheme has access to a mini-batch of episodes  from the policy $\pi_\theta$ itself. On the other hand, in an off-policy setting, the estimation scheme has to use the episodes simulated using a behavior policy.
    \item Gradient estimation: This refers to the estimation of the policy gradient $\nabla \rho(\theta)$ for a given parameter $\theta$. Such an estimate would be used to perform stochastic gradient ascent in the policy parameter.
\end{enumerate}
The estimation scheme is specific to the risk measure considered. For the theoretical guarantees in the next section, we require the following bound on the estimate $\hat\rho_m(\theta)$ of the risk $\rho(\theta)$, given $m$ episodes: For some positive constant $C_1$,
\begin{align}
    \label{eq:msebound}
    \E\left[\left\lvert \hat{\rho}_m(\theta) - \rho(\theta) \right\rvert^2\right]\leq\frac{C_1}{m}.
\end{align}
The condition above relates to the mean-square error of the risk estimator, and the rate of $O(\nicefrac{1}{m})$ is natural, considering such a bound is reasonable even for the case of an expected value objective.
For the two applications with mean-variance and distortion risk measures, we shall establish later that the estimators of these risk measures satisfy the condition specified above.

For handling the problem of gradient estimation, both algorithms use an SF-based estimation scheme. The choice of this gradient estimation scheme is not arbitrary. The application of the likelihood ratio method to derive a policy gradient theorem is not viable for certain risk measures. This limitation is evident in the case of the mean-variance risk measure, as demonstrated in \cite[Lemma~1]{prashanth2016mlj}. Specifically, when considering the policy gradient expression for the squared value $\E\left[(R^\theta)^2\right]$, it incorporates the gradient of the value function at each state of the MDP. Consequently, this inclusion presents challenges in accurately estimating the gradient.
In the aforementioned reference, the authors employed SPSA, a popular simultaneous perturbation method to workaround the policy gradient expression. In our work, we use SF, which also falls under the realm of simultaneous perturbation methods for gradient estimation. Moreover, unlike \cite{prashanth2016mlj}, we consider a broad class of smooth risk measures, and more importantly, we establish non-asymptotic bounds that quantify the rate of convergence of our proposed SF-based policy gradient algorithms.

The SF-based gradient estimation is a zeroth-order gradient estimation scheme, where the gradient is estimated from perturbed function values (cf. \cite{nesterov2017,shalabh_book, shamir}). The SF method forms a smoothed version of the objective function $\rho(\cdot)$ as $\rho_{\mu}(\cdot)$ and uses the gradient $\nabla \rho_{\mu}$ as an approximation for $\nabla \rho$. The smoothed functional $\rho_{\mu}(\theta)$ is defined as
\begin{align}
    \label{eq:rho_mu}
    \rho_{\mu}(\theta) = \mathbb{E}_{u \in \mathbb{B}^d}\left[\rho({\theta+\mu u})\right],
\end{align}
where $u$ is sampled uniformly at random from the unit ball $\mathbb{B}^d=\{x\in\mathbb{R}^d \mid \lVert x \rVert \leq 1\}$, and $\mu \in (0,1]$ is the smoothing parameter.
From \cite[Lemma 2.1]{flaxman}, we obtain the following expression for the gradient of $\rho_{\mu}(\theta)$.
\begin{align}
    \label{eq:del_rho_mu}
    \nabla\rho_{\mu}(\theta)=\mathbb{E}_{v\in\mathbb{S}^{d-1}}\left[\frac{d}{\mu}\rho({\theta+\mu v})v\right],
\end{align}
where $v$ is sampled uniformly at random from the unit sphere $\mathbb{S}^{d-1}=\{x\in\mathbb{R}^d \mid \lVert x \rVert = 1\}$.
In a deterministic optimization setting with  perfect measurements of $\rho(\cdot)$, the gradient $\nabla\rho_{\mu}(\theta)$ is estimated as follows:
\begin{align}
    \label{eq:hat_nabla_rho_0}
    \widehat{\nabla}_{\mu,n}\rho(\theta) = \frac{d}{n}\sum\limits_{i=1}^{n} \frac{\rho({\theta+\mu v_i}) - \rho({\theta - \mu v_i})}{2\mu}v_i,
\end{align}
where $\forall i, v_i$ is sampled uniformly at random from $\mathbb{S}^{d-1}$. The gradient estimate is averaged over $n$ unit vectors to reduce the variance.
Using the proof technique from \cite{nv1}, we show that $\widehat{\nabla}_{\mu,n}\rho(\theta)$ is an unbiased estimator of $\nabla\rho_{\mu}(\theta)$, see Appendix~A for the details.

In a typical RL setting, we may not have direct measurements of $\rho(\cdot)$, which need to be estimated using sample episodes. Let  $\hat{\rho}_m(\cdot)$ be the estimator for $\rho(\cdot)$, then we use a gradient estimator as given below:
\begin{align}
    \label{eq:hat_nabla_hat_rho}
    \widehat{\nabla}_{\mu,n}\hat{\rho}_m(\theta) = \frac{d}{n}\sum\limits_{i=1}^{n} \frac{\hat{\rho}_m({\theta+\mu v_i}) - \hat{\rho}_m({\theta - \mu v_i})}{2\mu}v_i.
\end{align}
We solve \eqref{eq:max_theta} using the following update iteration:
\begin{align}
    \label{eq:approx_theta_update}
    \theta_{k+1} = \theta_k + \alpha \widehat{\nabla}_{\mu,n} \hat{\rho}_m({\theta_k}),
\end{align}
where $\theta_0$ is set arbitrarily, and $\alpha$ is the step-size.

We consider two algorithms, both armed with a risk estimator $\hat{\rho}_m(\cdot)$ and a risk gradient estimate using SF. In our first algorithm OnP-SF, $\hat{\rho}_m(\cdot)$ uses an on-policy evaluation.  Algorithm \ref{alg:onP} presents the pseudocode of OnP-SF.
\begin{algorithm}[ht]
    \caption{OnP-SF}
    \label{alg:onP}
    \begin{algorithmic}[1]
        \STATE \textbf{Input}: Parameterized form of the policy $\pi$, iteration limit $N$, step-size $\alpha$, perturbation parameter $\mu$, and batch sizes $m$ and $n$;
        \STATE \textbf{Initialize}: Target policy parameter $\theta_{0} \in \mathbb{R}^d$, and the discount factor $\gamma \in (0,1)$;
        \FOR {$k=0,\hdots, N-1$ }
        \FOR {$i=1,\hdots, n$ }
        \STATE Get $[v_i^1, \hdots, v_i^d] \in \mathbb{S}^{d-1}$;
        \STATE Generate $m$ episodes each using $\pi_{(\theta_k \pm \mu v_i)}$;
        \STATE Estimate $\hat{\rho}_m({\theta_k \pm \mu v_i)}$;
        \ENDFOR
        \STATE Use \eqref{eq:hat_nabla_hat_rho} to estimate $\widehat{\nabla}_{\mu,n} \hat{\rho}_m({\theta_k})$;
        \STATE Use \eqref{eq:approx_theta_update} to calculate $\theta_{k+1}$;
        \ENDFOR
        \STATE \textbf{Output}: Policy $\theta_R$, where $R \sim \mathcal{U}\{0,N-1\}$
    \end{algorithmic}
\end{algorithm}

Each iteration of OnP-SF requires $2mn$ episodes corresponding to $2n$ perturbed policies. In some practical applications, it may not be feasible to generate system trajectories corresponding to different perturbed policies. In our second algorithm OffP-SF, we overcome the aforementioned problem by performing the off-policy evaluation. Using the off-policy setting, the number of episodes needed in each iteration of our algorithm can be reduced to $m$.
The pseudocode of OffP-SF is similar to Algorithm \ref{alg:onP} with the following deviations: The estimate $\hat{\rho}_m({\theta_k \pm \mu v_i)}$ in step 7 is performed in a off-policy fashion, and for this purpose $m$ episodes are generated only once using the behavior policy. In contrast, step 6 in Algorithm \ref{alg:onP} requires simulation of $m$ episodes in each iteration using the current policy parameter $\pi_{\theta_k \pm \mu v_i}$.
% \begin{algorithm}[h]
    %     \caption{OffP-SF}
    %     \label{alg:offP_sf}
    %     \begin{algorithmic}[1]
        %         \STATE \textbf{Input}: Parameterized form of the policy $\pi_\theta$, behavior policy b, iteration limit $N$, step-size $\alpha$, smoothing parameter $\mu$, and batch sizes $m$ and $n$;
        %         \STATE \textbf{Initialize}: Policy $\theta_{0} \in \mathbb{R}^d$, and discount factor $\gamma \in (0,1)$;
        %         \FOR {$k=0,\hdots, N-1$ }
        %         \STATE Get $m$ episodes from $b$;
        %         \FOR {$i=1,\hdots, n$ }
        %         \STATE Get $[v_i^1, \hdots, v_i^d] \in \mathbb{S}^{d-1}$;
        %         \STATE Estimate $\hat{\rho}_m({\theta_k \pm \mu v_i)}$;
        %         \ENDFOR
        %         \STATE Use \eqref{eq:hat_nabla_hat_rho} to estimate $\widehat{\nabla}_{\mu,n} \hat{\rho}_m({\theta_k})$;
        %         \STATE Use \eqref{eq:approx_theta_update} to calculate $\theta_{k+1}$;
        %         \ENDFOR
        %         \STATE \textbf{Output}: Policy $\theta_R$, where $R\sim\mathcal{U}\{0,N-1\}$.
        %     \end{algorithmic}
    % \end{algorithm}
%%%%%%%%%%%%%%%%%%%%%%%%%%%%%%%%%%%%%%%%%%%%%%%%%%%%%%%%%%%%%%%%%%%%%%%%%%%%%%%%%%%
\section{Main results}
\label{sec:main}
Our non-asymptotic analysis establishes a bound on the number of iterations of our proposed algorithms to find an $\epsilon$-stationary point of the smooth risk measure, which is defined below.
\begin{definition}[$\epsilon$-stationary point]
    \label{def:esolution}
    Fix $\epsilon>0$. Let $ \theta_R$ be the random output of an algorithm. Then, $\theta_R$ is called an $\epsilon$-stationary point of problem \eqref{eq:max_theta}, if $\mathbb{E}\left[ \left\Vert \nabla \rho \left( \theta_R \right) \right\rVert^2\right] \leq \epsilon$, where the expectation is over $R$.
\end{definition}
For a non-convex objective function, it is common in optimization literature to establish a convergence rate result to an $\epsilon$-stationary point. Such a convergence notion is used in the analysis of policy gradient algorithms as well, cf.  \cite{papini2018,shen2019hessian,zhangK2020}.

\subsection{Bounds for OnP-SF/OffP-SF}
We  make the following assumptions for the sake of analysis.
\begin{assumption}
    \label{as:mse}
    $\forall \theta \in \R^d$, $\E\left[\left\lvert \hat{\rho}_m(\theta) - \rho(\theta) \right\rvert^2\right]\leq\frac{C_1}{m}$.
\end{assumption}
\begin{assumption}
    \label{as:lip}
    $\forall \theta_1, \theta_2 \in \R^d$, $\left\lvert \rho(\theta_1)-\rho(\theta_2)\right\rvert \leq L_{\rho} \left\lVert \theta_1 - \theta_2 \right\rVert$.
\end{assumption}
\begin{assumption}
    \label{as:smooth}
    $\forall \theta_1, \theta_2 \in \R^d$, $\left\lVert \nabla \rho (\theta_1)-\nabla \rho (\theta_2)\right\rVert \leq L_{\rho'} \left\lVert \theta_1 \!-\! \theta_2 \right\rVert$.
\end{assumption}

We present bounds for an iterate $\theta_R$ that is chosen uniformly at random from $\{\theta_0,\cdots,\theta_{N-1}\}$. The bound that we present below applies to the template algorithm for on-policy as well as off-policy RL settings. Moreover, the bound below is for a general step-size, smoothing parameter and batch size parameters. Subsequently, we specialize this result to arrive at a $O(\nicefrac{1}{\sqrt{N}})$ bound on $\mathbb{E}\left[ \left\Vert \nabla \rho \left( \theta_R \right) \right\rVert^2\right]$. The proofs can be found in Appendix~A. %or in the complete version of this paper accessible through \cite{nv2023}.
%%%%%%%%%%%%%%%%%%%%%%%%%%%%%%%%%%%%%%%%%%%%%%%%%%%%%%%%%%%%
\begin{proposition}(OnP-SF/OffP-SF)
    \label{pr:non_asym_sf}
    Assume \ref{as:mse}-\ref{as:smooth}. Let $\{\theta_i,i=0,\cdots,N-1\}$ be the policy parameters generated by OnP-SF/OffP-SF, and let $\theta_R$ be chosen uniformly at random from this set. Let $\rho^*=\max_{\theta\in\R^d}\rho(\theta)$. Then
    \begin {align}
    \label{eq:tm_sf}
    &\mathbb{E}\left[\left\lVert \nabla \rho(\theta_R)\right\rVert^2\right]\nonumber\\
    &\quad\leq \frac{2 \left(\rho^* - \rho(\theta_{0})\right)}{N \alpha}
    + L_{\rho'} \alpha \left (\frac{3d^2C_1}{2\mu^2m} + 3d^2L_{\rho}^2\right )\nonumber\\
      &\quad  + \left( \frac{3C_1d^2}{\mu^2m}+\frac{48e^2d^2L_{\rho}^2}{n}+\frac{3\mu^2 d^2 L_{\rho'}^2}{4} \right).
    \end {align}
    where $L_\rho,L_\rho'$, and $C_1$ are as in \ref{as:mse}-\ref{as:smooth}.
\end{proposition}
A straightforward specialization of the bound in \eqref{eq:tm_sf} with specific choices for the step-size $\alpha$, smoothing parameter $\mu$, and batch sizes $m$ and $n$ leads to following bound for OnP-SF and OffP-SF algorithms.
\begin{theorem}(\textbf{OnP-SF/OffP-SF})
    \label{tm:sf}
    Set $\alpha=\frac{1}{\sqrt{N}}$, $\mu=\frac{1}{\sqrt[4]{N}}$, $n=\sqrt{N}$, and $m=N$. Then, under the conditions of Proposition \ref{pr:non_asym_sf}, we have
    \begin {align*}
    &\mathbb{E}\left[\left\lVert \nabla \rho(\theta_R)\right\rVert^2\right]\nonumber\\
    &\quad\leq \frac{2 \left(\rho^* - \rho(\theta_{0})\right)}{\sqrt{N}}
    + \frac{3d^2L_{\rho'} C_1}{2N} \\
    &\quad+ \frac{3d^2 \left(L_{\rho}^2L_{\rho'}+C_1+16e^2L_{\rho}^2\right )}{\sqrt{N}}+\frac{3 d^2 L_{\rho'}^2}{4\sqrt{N}} .
    \end {align*}
\end{theorem}
\begin{remark}
    The results above show that after $N$ iterations of \eqref{eq:approx_theta_update}, OnP-SF/OffP-SF return an iterate that satisfies $\E\!\left[\left\lVert\nabla\rho(\theta_R)\right\rVert^2\right]=O\left(\nicefrac{1}{\sqrt{N}}\right)$. To put it differently, to find an $\epsilon$-stationary point of the smooth risk measure objective, an order $O(\nicefrac{1}{\epsilon^2})$ iterations of OnP-SF/OffP-SF are enough.
\end{remark}
\begin{remark}
        The bounds obtained for OnP-SF and OffP-SF are $O(\nicefrac{1}{\sqrt{N}})$. The OnP-SF requires $\Theta(N\sqrt{N})$ episodes to achieve the aforementioned rate.
        On the other hand,  OffP-SF requires only $\Theta(N)$ episodes. This difference arises because, in the off-policy setting with $m$ episodes from policy $b$, one can achieve $n$ averaging in the gradient estimate by calculating $mn$ episodes corresponding to policy $\pi$. This approach does not require simulating additional episodes.
\end{remark}
\begin{remark}
    Typical results in risk-sensitive RL literature are produces asymptotic in nature. An exception is a result from \cite{prashla2021}, where non-asymptotic bounds of $O\left(\nicefrac{1}{N^{1/3}}\right)$ are presented. In contrast, we derive $O(\nicefrac{1}{\sqrt{N}})$ bounds for SRMs.
\end{remark}
% %%%%%%%%%%%%%%%%%%%%%%%%%%%%%%%%%%%%%%%%%%%%%%%%%%%%%%%%%%%%%%%%%%%%%%%%%%%%%%%%%%%
\section{Applications}
\label{sec:app}
Under relatively general conditions, DRM and MVRM can be considered as instances of smooth risk measures. We describe these risk measures in the following sections.
\subsection{Distortion risk measures (DRM)}
\label{subsec:drm}
The DRM of  $R^\theta$, defined in \eqref{eq:Rtheta} is the expected value of $R^\theta$ under a distortion of the CDF  $F_{R^{\theta}}$, attained using a given distortion function $g(\cdot)$.  We denote by $\rho_g(\theta)$ the DRM of $R^\theta$, and is defined as follows:
\begin{align}
    \label{eq:rho_g_1}
    \rho_g(\theta)\!=\!\!\int_{-M_r}^{0}\!\!\!\!(g(1\!-\!F_{R^{\theta}}(x))\!-\!1) dx + \!\!\int_{0}^{M_r}\!\!\!\! g(1\!-\!F_{R^{\theta}}(x))dx.
\end{align}
The distortion function $g:[0,1]\to[0,1]$ is non-decreasing, with $g(0)=0$ and $g(1)=1$. We can see that $\rho_g(\theta)\!=\!\E[R^\theta]$, if $g(\cdot)$ is the identity function. The limit of the integration in \ref{eq:rho_g_1}, $M_r = r_{\textrm{max}}(\nicefrac{1-\gamma^{M_e}}{1-\gamma})$.
A few examples of $g(\cdot)$ are available in Table \ref{tb:g} and their plots are in Figure \ref{fg:g}.
\begingroup
\renewcommand{\arraystretch}{1.25}
\begin{table}[t]
    \caption{Examples of distortion functions}
    \label{tb:g}
    \begin{center}
        \begin{small}
             \begin{tabular}{ll}
                \toprule
                Dual-power function  & $g(s)=1-(1-s)^{\lambda}$,\; ${\lambda}\geq 2$\\
                Quadratic function    & $g(s)=(1+{\lambda}) s- {\lambda} s^2$, \;$0\leq {\lambda}\leq 1$\\
                Exponential function &$g(s)=\nicefrac{1-\exp(-{\lambda} s)}{1-\exp(-{\lambda} )}$,\;${\lambda}>0$\\
                Square-root function  & $g(s)=\nicefrac{\sqrt{1+{\lambda} s}-1}{\sqrt{1+{\lambda} }-1}$,\; ${\lambda}>0$\\
                Logarithmic function  & $g(s)=\nicefrac{\log(1+{\lambda} s)}{\log(1+{\lambda} )}$,\; ${\lambda}>0$\\
                \bottomrule
            \end{tabular}
        \end{small}
    \end{center}
   % \vskip -0.2in
\end{table}
%\endgroup
\begin{figure}[t]
    \begin{center}
        {\includegraphics[width=0.8\columnwidth]{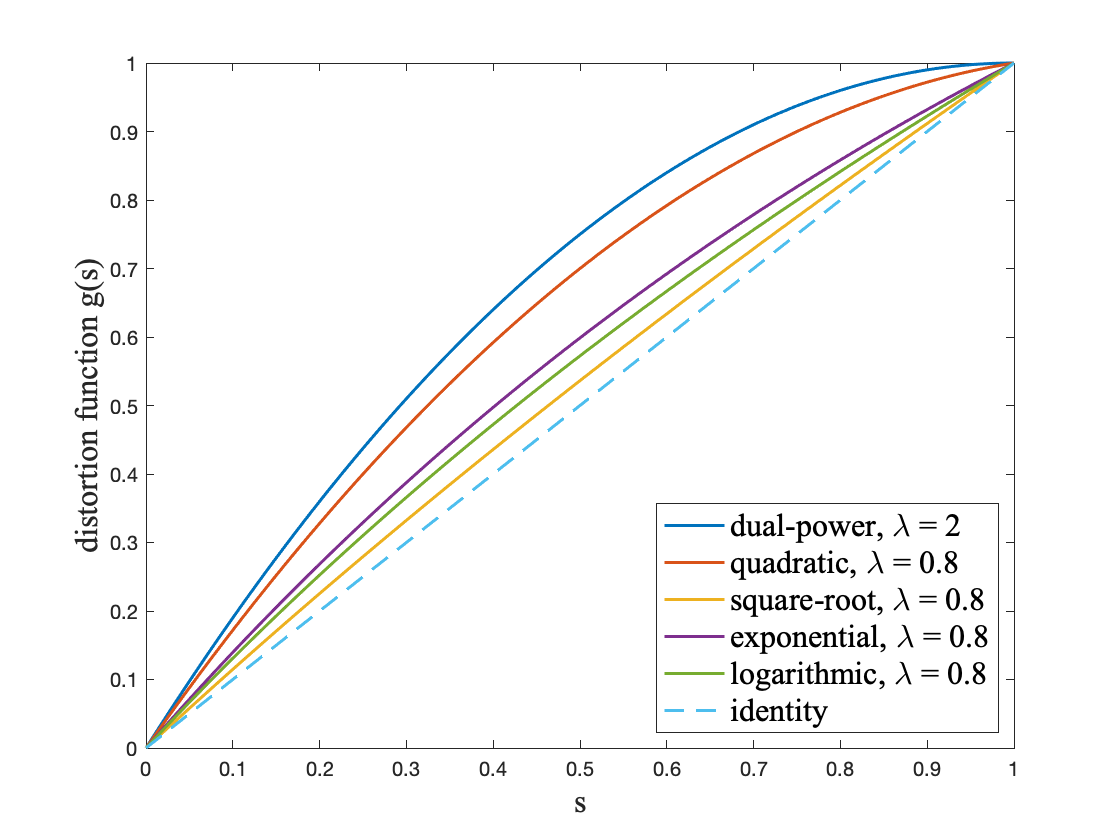}}
        \caption{Examples of distortion functions}
        \label{fg:g}
    \end{center}
   % \vskip -0.2in
\end{figure}

Recall that the optimization problem in \eqref{eq:max_theta} is solved using stochastic gradient algorithm, and for each update iteration, we require estimates of $\rho_g(\cdot)$. In the following sections, we describe our algorithms that estimate DRM in on-policy and off-policy RL settings, respectively.
%%%%%%%%%%%%%%%%%%%%%%%%%%%%%%%%%%%%%%%%%%%%%%%%%%%%%%%%%%%%%%%%%%%%%%%%%%%
\subsubsection{On-policy DRM estimation}
\label{subsubsec:onpolicy}
We generate $m$ episodes using the policy $\pi_\theta$, and estimate the CDF $F_{R^{\theta}}(\cdot)$ using sample averages.
We denote by $R^{\theta}_i$ the cumulative reward of the episode $i$.
We form the estimate $G^m_{R^{\theta}}(\cdot)$ of $F_{R^{\theta}}(\cdot)$ as follows:
\begin{align}
    \label{eq:G}
    G^m_{R^{\theta}}(x) = \frac{1}{m}\sum\limits_{i=1}^m \1\{R^{\theta}_i\leq x\}.
\end{align}
Now, we form an estimate $\hat{\rho}_g^G(\theta)$ of $\rho_g(\theta)$ as follows:
\begin{align}
    \label{eq:hat_rho_G}
    \hat{\rho}_g^G(\theta)\!=\!\!\!\int_{-M_r}^{0}\!\!\!\!\!(g(1\!-\!G^m_{R^{\theta}}(x))\!-\!1) dx +\!\! \int_{0}^{M_r}\!\!\!\!\! g(1\!-\!G^m_{R^{\theta}}(x))dx.
\end{align}
Comparing \eqref{eq:hat_rho_G} with \eqref{eq:rho_g_1}, it is apparent that we have used the empirical distribution function $G^m_{R^{\theta}}$ in place of the true CDF $F_{R^{\theta}}$.

We simplify \eqref{eq:hat_rho_G} in terms of order statistics as follows:
\begin{align}
    \label{eq:hat_rho_G1}
    \hat{\rho}_g^G(\theta)=  \sum\limits_{i=1}^{m} {R^\theta_{(i)}} \left(g\left(1\!-\! \frac{i\!-\!1}{m}\right) \!-\! g\left(1\!-\! \frac{i}{m}\right)\right),
\end{align}
where $R^\theta_{(i)}$ is the $i^{th}$ smallest order statistic of the samples $\{R^\theta_1,\cdots R^\theta_m\}$. The reader is referred to  Lemma~13 in Appendix~B for the proof. If we choose the distortion function as the identity function, then the estimator in \eqref{eq:hat_rho_G1} is merely the sample mean.

We make the following assumptions to ensure the Lipschitzness, and smoothness of the DRM $\rho_g$.
\begin{assumption}
    \label{as:g'_bound}
    $\exists M_{g'},M_{g''}>0: \forall t\in(0,1)$, $\lvert g'(t)\rvert \leq M_{g'}$, and $ \lvert g''(t) \rvert \leq M_{g''}$.
\end{assumption}
The assumption \ref{as:g'_bound} helps us establish that the distortion functions and its derivative are Lipschitz continuous.
A few examples of distortion functions, which satisfy \ref{as:g'_bound} are given in Table \ref{tb:g}.
%Since $g(\cdot)$ is bounded by definition, we can see that any $g(\cdot)$ whose second derivative is bounded, will have a bounded first derivative also.

A critical requirement for establishing convergence guarantee is a bound on the MSE of the risk estimation scheme, as given in \ref{as:mse}. The result below shows that this MSE requirement is met by the DRM estimator \eqref{eq:hat_rho_G}. The proof can be found in Appendix~B.

\begin{lemma}
    \label{lm:est_error_G}
    Assume \ref{as:nabla_logpi} and \ref{as:g'_bound}. Then,
    \begin{align*}
        \E\left[\left\lvert \rho_g(\theta)- \hat{\rho}_g^G(\theta)\right\rvert^2\right]\leq\frac{16M_r^2M_{g'}^2}{m}.
    \end{align*}
\end{lemma}
%%%%%%%%%%%%%%%%%%%%%%%%%%%%%%%%%%%%%%%%%%%%%%%%%%%%%%%%%%%%%%%%%%%%%%%%%%%
\subsubsection{Off-policy DRM estimation}
\label{subsubsec:drm_offpolicy}
We generate $m$ episodes using the policy $b$ to estimate the CDF $F_{R^{\theta}}(\cdot)$ using importance sampling.
We denote by $R^b_i$ the cumulative reward, and $\psi^\theta_i$ the importance sampling ratio of the episode $i$.
We form the estimate $H^m_{R^{\theta}}(\cdot)$ of $F_{R^{\theta}}(\cdot)$ as follows:
\begin{align}
    H^m_{R^{\theta}}(x) &= \min\{\hat{H}^m_{R^{\theta}}(x),1\}, \textrm{ where} \label{eq:H}\\
    \hat{H}^m_{R^{\theta}}(x) &= \frac{1}{m}\sum\limits_{i=1}^m\1\{R^{b}_i\leq x\}\psi^\theta_i.\label{eq:hatH}
\end{align}
In the above, $\hat{H}^m_{R^{\theta}}(x)$ is an empirical estimate of $F_{R^{\theta}}(x)$ as $F_{R^{\theta}}(x) = \E\left[\1\{R^b\leq x\}\psi^\theta  \right]$. Because of the importance sampling ratio, $\hat{H}^m_{R^{\theta}}(x)$ can get a value above $1$. Since we are estimating a CDF, we restrict $\hat{H}^m_{R^{\theta}}(x)$ to $H^m_{R^{\theta}}(x)$.

Now we form an estimate $\hat{\rho}_g^H(\theta)$ of $\rho_g(\theta)$ as
\begin{align}
    \label{eq:hat_rho_H}
    \hat{\rho}_g^H(\theta)\!=\!\!\!\int\nolimits_{-M_r}^{0}\!\!\!\!\!(g(1\!-\!H^m_{R^{\theta}}(x))\!-\!1) dx + \!\!\!\int\nolimits_{0}^{M_r}\!\!\!\!\! g(1\!-\!H^m_{R^{\theta}}(x))dx.
\end{align}

We can simplify \eqref{eq:hat_rho_H} in terms of order statistics as
\begin{align}
    \label{eq:hat_rho_H1}
    \hat{\rho}_g^H(\theta)&=R^b_{(1)}+ \sum_{i=2}^{m} {R^b_{(i)}} g\left(1\!-\! \min\left\{1,\frac{1}{m}\sum_{k=1}^{i-1}\psi^\theta_{(k)}\right\}\right)\nonumber\\
    & - \sum_{i=1}^{m-1}{R^b_{(i)}} g\left(1\!-\! \min\left\{1,\frac{1}{m}\sum_{k=1}^{i}\psi^\theta_{(k)}\right\}\right),
\end{align}
where $R^b_{(i)}$ is the $i^{th}$ smallest order statistic of the samples $\{R^b_1,\cdots R^b_m\}$, and $\psi^\theta_{(i)}$ is the importance sampling ratio of $R^b_{(i)}$. The reader is referred to Lemma~14 in Appendix~B for the proof.

A result in the spirit of Lemma \ref{lm:est_error_G} for the off-policy setting is given below. The result below shows that the MSE requirement in \ref{as:mse} is met by the DRM estimator \eqref{eq:hat_rho_H}. The proof can be found in Appendix~B.

\begin{lemma}
    \label{lm:est_error_H}
    Assume \ref{as:nabla_logpi}-\ref{as:b_pol} and \ref{as:g'_bound}. Then,
    \begin{align*}
        \E\left[\left\lvert \rho_g(\theta)- \hat{\rho}_g^H(\theta)\right\rvert^2\right]\leq\frac{16M_r^2M_{g'}^2M_s^2}{m}.
    \end{align*}
\end{lemma}
%%%%%%%%%%%%%%%%%%%%%%%%%%%%%%%%%%%%%%%%
\subsubsection{Convergence analysis}
\label{subsubsec:drm_conv}
First we show that the assumptions \ref{as:lip}-\ref{as:smooth} are satisfied for the DRM using the results from the following lemma (see Appendix~B for the proof).
\begin{lemma}
    \label{lm:rho_lip}
    $\forall \theta_1,\theta_2 \in \mathbb{R}^d$,
    \begin{align*}
        &\left\lvert \rho_g(\theta_1)- \rho_g(\theta_2)\right\rvert
        \leq L_\rho \left\lVert \theta_1 \!-\! \theta_2 \right\rVert; L_\rho=2M_rM_{g'}M_eM_d,\\
        & \left\lVert \nabla\rho_g(\theta_1) - \nabla \rho_g(\theta_2) \right\rVert \leq
        L_{\rho'} \left\lVert  \theta_1  - \theta_2\right\rVert;\\
        &\qquad  L_{\rho'}=2M_r M_e \left( M_h M_{g'}+M_eM_d^2 (M_{g'}+ M_{g''})\right).
    \end{align*}
\end{lemma}
The main result that establishes a non-asymptotic bound for Algorithm \ref{alg:onP} with DRM as the risk measure is given below.
\begin{corollary}(DRM-OnP-SF)
    \label{cr:drm_onP}
    Assume \ref{as:nabla_logpi} and \ref{as:g'_bound}. Then, under the conditions of Theorem \ref{tm:sf}, we have
    \begin {align*}
    &\mathbb{E}\left[\left\lVert \nabla \rho_g(\theta_R)\right\rVert^2\right]\\
    &\quad\leq \frac{2 \left(\rho_g^* - \rho_g(\theta_{0})\right)}{\sqrt{N}}
        + \frac{3d^2L_{\rho'} C_1}{2N} \\
        &\quad+ \frac{3d^2 \left(L_{\rho}^2L_{\rho'}+C_1+16e^2L_{\rho}^2\right )}{\sqrt{N}}+\frac{3 d^2 L_{\rho'}^2}{4\sqrt{N}}.
    \end {align*}
    In the above, $\rho_g^*\!=\!\max_{\theta\in\mathbb{R}^d}\rho_g(\theta)$. The constants $C_1, L_{\rho}$, and $L_{\rho'}$ are as in Lemmas \ref{lm:est_error_G} and \ref{lm:rho_lip}, respectively.
\end{corollary}
\begin{proof}
    Lemma \ref{lm:est_error_G} implies \ref{as:mse} holds for DRM estimator.
    Lemma \ref{lm:rho_lip} implies the conditions in \ref{as:lip} and \ref{as:smooth} hold for DRM. The main claim now follows by an application of Theorem \ref{tm:sf}.
\end{proof}
For the off-policy case, a non-asymptotic bound can be inferred from Theorem \ref{tm:sf} in a similar fashion as the on-policy case, with
Lemma \ref{lm:est_error_H} in place of Lemma \ref{lm:est_error_G}.
\begin{corollary}(DRM-OffP-SF)
    \label{cr:drm_offP}
    Assume \ref{as:nabla_logpi}-\ref{as:g'_bound}. Then, under the conditions of Theorem \ref{tm:sf}, we have
    \begin {align*}
    &\mathbb{E}\left[\left\lVert \nabla \rho_g(\theta_R)\right\rVert^2\right]\\
    &\quad\leq \frac{2 \left(\rho_g^* - \rho_g(\theta_{0})\right)}{\sqrt{N}}
        + \frac{3d^2L_{\rho'} C_1}{2N}\\
        &\quad + \frac{3d^2 \left(L_{\rho}^2L_{\rho'}+C_1+16e^2L_{\rho}^2\right )}{\sqrt{N}}+\frac{3 d^2 L_{\rho'}^2}{4\sqrt{N}}.
    \end {align*}
    In the above, $\rho_g^*,L_{\rho}$, and $L_{\rho'}$ are as in Corollary \ref{cr:drm_onP}. The constant $C_1$ is as in Lemma \ref{lm:est_error_H}.
\end{corollary}
\begin{remark}
    If we choose the distortion function as the identity function, then the estimator in \eqref{eq:rho_g_1} is merely  the sample mean, and we recover the guarantees for a risk-neutral policy gradient algorithm. In particular, our bounds match the guarantees given by \cite{nv1}, which employs an SF-based gradient estimation scheme in a risk-neutral setting, and establishes consistency with the bounds of the REINFORCE style policy gradient algorithm.
\end{remark}
%%%%%%%%%%%%%%%%%%%%%%%%%%%%%%%%%%%%%%%%%%%%%%%%%%%%%%%%%%%%%%%%%%%%%%%%%%%%%%
\subsection{Mean-variance risk measure (MVRM)}
\label{subsec:mvrm}
The MVRM $\rho_{\lambda}(\theta)$ of $R^\theta$, defined in \eqref{eq:Rtheta},  is given by
\begin{align}
    \label{eq:rho_lambda}
    &\rho_\lambda(\theta)=J(\theta)-\lambda V(\theta), \textrm{ where} \nonumber\\
    &J(\theta)=\E\left[R^\theta\right];\;V(\theta)=\E\left[\left(R^\theta\right)^2\right]-J(\theta)^2.
\end{align}
In the above, $J(\theta)$ is the value function, which is the objective in a risk-neutral RL setting. Further, $V(\theta)$ is the variance of the cumulative reward, and $\lambda$ is a scalar that is used to tradeoff between mean and variance. A popular risk measure in control literature is exponential utility, where the objective is $-\frac{1}{\lambda}\log\mathbb{E}[e^{-\lambda R^\theta}]$, with $R^\theta$ denoting the cumulative reward.
Using a first-order Taylor expansion, it is apparent that
\[-\frac{1}{\lambda}\log\mathbb{E}[e^{-\lambda R^\theta}] = \mathbb{E}[R^\theta] - \frac{\lambda}{2}\text{Var}[R^\theta]+O(\lambda^2).\]
Thus, the MVRM risk measure defined above can be seen as an approximation to the exponential utility risk measure. Optimizing the latter risk measure in an RL context is challenging, and to the best of our knowledge, there is no RL algorithm with a compact parameterization for this problem. Instead of using a parameterized family of policies, the authors in \cite{prashla2021} adopt a different approach by treating the policy as a probability vector over all states and actions. Further, they introduce a two timescale tabular algorithm using Q-values within the context of an average-cost MDP setting (see Section 7.1 of \cite{prashla2021} for the details). In contrast, we present a policy gradient algorithm for MVRM with a provable bound on the rate for stationary convergence.

Next, we describe the estimation of the MVRM in on-policy and off-policy settings, respectively.
\subsubsection{On-policy MVRM estimation}
\label{subsubsec:mvrm_onpolicy}
We generate $m$ episodes using the policy $\pi_\theta$, and estimate $J(\theta)$ and  $V(\theta)$ using sample averages.
We denote by $R^{\theta}_i$ the cumulative reward of the episode $i$. The estimators $\hat{J}_m^{\pi}$ of $J(\theta)$ and $\widehat{V}_m^{\pi}$ of $V(\theta)$ is defined as follows:
\begin{align}
    \label{eq:Jm_pi}
    &\hat{J}_m^{\pi}(\theta)=\frac{1}{m}\sum\limits_{i=1}^{m} R^\theta_i;\\
    \label{eq:Vm_pi}
    &\widehat{V}_m^{\pi}(\theta)=\frac{1}{m-1}\sum\limits_{i=1}^{m} \left(R^\theta_i-\hat{J}_m^{\pi}\right)^2.
\end{align}
Using Theorem 2-3 in \cite[chapter V1]{mood74}, we can see that the above estimates are unbiased.

Using \eqref{eq:Jm_pi} and \eqref{eq:Vm_pi}, we estimate $\rho_\lambda(\theta)$ as follows:
\begin{align}
    \label{eq:hat_rho_lambda_pi}
    &\hat{\rho}_\lambda^{\pi}(\theta) = \hat{J}_m^{\pi}(\theta)-\lambda\widehat{V}_m^{\pi}(\theta).
\end{align}

The result below for the mean-variance estimator \eqref{eq:hat_rho_lambda_pi} satisfies an MSE bound of order $O(1/m)$, in turn verify \ref{as:mse}. The proof can be found in Appendix~C.
\begin{lemma}
    \label{lm:est_error_mvrm_pi}
    Assume \ref{as:nabla_logpi}, and let $m>2$. Then
    \begin{align*}
        \E\left[\left\lvert \hat{\rho}_\lambda^{\pi}(\theta)- \rho_\lambda(\theta) \right\rvert^2\right]
        \leq \frac{8M_r^2+32\lambda^2M_r^4}{m}.
    \end{align*}
\end{lemma}
\subsubsection{Off-policy MVRM estimation}
\label{subsec:mvrm_offpolicy}
We generate $m$ episodes using the policy $b$ to estimate $J(\theta)$ using importance sampling.
We denote by $R^b_i$ the cumulative reward, and $\psi^\theta_i$ the importance sampling ratio of the episode $i$. Since $J(\theta)=\E\left[R^b\psi^\theta\right]$, we estimate it using sample average as follows:
\begin{align}
    \label{eq:Jm_b}
    &\hat{J}_m^{b}(\theta)=\frac{1}{m}\sum\limits_{i=1}^{m} R^b_i \psi^\theta_i;\\
    %\end{align}
    %\begin{align}
    \label{eq:Vm_b}
    &\widehat{V}_m^{b}(\theta)=\frac{1}{m-1}\sum\limits_{i=1}^{m} \left(R^b_i \psi^\theta_i -\hat{J}_m^{b}\right)^2.
\end{align}
As in the on-policy setting, these estimates are unbiased.

Now using \eqref{eq:Jm_b} and \eqref{eq:Vm_b}, we estimate $\rho_\lambda(\theta)$ as follows:
\begin{align}
    \label{eq:hat_rho_lambda_b}
    &\hat{\rho}_\lambda^{b}(\theta) = \hat{J}_m^{b}(\theta)-\lambda\widehat{V}_m^{b}(\theta).
\end{align}

A result in the spirit of Lemma \ref{lm:est_error_mvrm_pi} for the off-policy setting is given below. The result below for the mean-variance estimator \eqref{eq:hat_rho_lambda_b} satisfies an MSE bound of order $O(1/m)$, in turn verify \ref{as:mse}. The proof can be found in Appendix~C.
\begin{lemma}
    \label{lm:est_error_mvrm_b}
    Assume \ref{as:nabla_logpi}-\ref{as:b_pol}, and let $m>2$. Then
    \begin{align*}
        \E\left[\left\lvert \hat{\rho}_\lambda^{b}(\theta)- \rho_\lambda(\theta) \right\rvert^2\right]
        \leq \frac{8M_r^2M_s^2+32\lambda^2M_r^4M_s^4}{m}.
    \end{align*}
\end{lemma}
%%%%%%%%%%%%%%%%%%%%%%%%%%%%%%%%%%%%%%%%%%%%%%%%%%%%%%%%%%%%%%%%%%%%%%%%%%%%%%
\subsubsection{Convergence analysis}
\label{subsubsec:mvrm_conv}
We specialize the result in Proposition \ref{pr:non_asym_sf} to MVRM. Though MVRM is previously analyzed in \cite{tamar2012policy,prashla13}, they only provide asymptotic convergence results.
In the following lemma, we show that the assumptions \ref{as:lip}-\ref{as:smooth} are satisfied for the MVRM. The proof can be found in Appendix~C.
\begin{lemma}
    \label{lm:lip_rho_lambda}
    $\forall \theta_1,\theta_2 \in \R^d$,
    \begin{align*}
        & \left\lvert \rho_\lambda(\theta_1)-\rho_\lambda(\theta_1)\right\rvert
        \leq L_{\rho} \left\lVert \theta_1 - \theta_2 \right\rVert;\\
        &\qquad L_{\rho}=M_rM_eM_d+ 3\lambda M_r^2 M_e M_d, \textrm{ and}\\
        &\left\lVert \nabla \rho_\lambda(\theta_1)-\nabla \rho_\lambda(\theta_1)\right\rVert
        \leq  L_{\rho'} \left\lVert \theta_1 - \theta_2 \right\rVert;\\
        &\qquad L_{\rho'} = M_rM_e\left(M_h+M_eM_d^2\right)\\
        &\qquad\qquad +\lambda M_r^2M_e\left(3M_h+5 M_eM_d^2\right).
    \end{align*}
\end{lemma}
\begin{corollary}(MVRM-OnP-SF)
    \label{cr:mvrm_onP}
    Assume \ref{as:nabla_logpi}. Let the batch size $m=N>2$. Then, under the conditions of Theorem \ref{tm:sf}, we have
    \begin {align*}
    &\mathbb{E}\left[\left\lVert \nabla \rho_\lambda(\theta_R)\right\rVert^2\right]\\
    &\quad\leq \frac{2 \left(\rho_\lambda^* - \rho_\lambda(\theta_{0})\right)}{\sqrt{N}}
        + \frac{3d^2L_{\rho'} C_1}{2N} \\
        &\quad+ \frac{3d^2 \left(L_{\rho}^2L_{\rho'}+C_1+16e^2L_{\rho}^2\right )}{\sqrt{N}}+\frac{3 d^2 L_{\rho'}^2}{4\sqrt{N}}.
    \end {align*}
    In the above, $\rho_{\lambda}^*\!=\!\max_{\theta\in\mathbb{R}^d}\rho_{\lambda}(\theta)$. The constants $C_1, L_{\rho}$, and $L_{\rho'}$ are as in Lemmas \ref{lm:est_error_mvrm_pi} and \ref{lm:lip_rho_lambda}, respectively.
\end{corollary}
\begin{proof}
    Lemma \ref{lm:est_error_mvrm_pi} implies \ref{as:mse} holds for MVRM estimator.
    Lemma \ref{lm:lip_rho_lambda} implies the conditions in \ref{as:lip} and \ref{as:smooth} hold for MVRM.
    The main claim now follows by an application of Theorem \ref{tm:sf}.
\end{proof}
The bound for the off-policy variant follows by using
Lemma \ref{lm:est_error_mvrm_b} in place of Lemma \ref{lm:est_error_mvrm_pi}.
\begin{corollary}(MVRM-OffP-SF)
    \label{cr:mvrm_offP}
    Assume \ref{as:nabla_logpi}-\ref{as:b_pol}. Let the batch size $m=N>2$. Then, under the conditions of Theorem \ref{tm:sf}, we have
    \begin {align*}
    &\mathbb{E}\left[\left\lVert \nabla \rho_\lambda(\theta_R)\right\rVert^2\right]\\
    &\quad\leq \frac{2 \left(\rho_\lambda^* - \rho_\lambda(\theta_{0})\right)}{\sqrt{N}}
        + \frac{3d^2L_{\rho'} C_1}{2N} \\
        &\quad+ \frac{3d^2 \left(L_{\rho}^2L_{\rho'}+C_1+16e^2L_{\rho}^2\right )}{\sqrt{N}}+\frac{3 d^2 L_{\rho'}^2}{4\sqrt{N}}.
    \end {align*}
    In the above, $\rho_{\lambda}^*,L_{\rho}$, and $L_{\rho'}$ are as in Corollary \ref{cr:mvrm_onP}. The constant $C_1$ is as in Lemma \ref{lm:est_error_mvrm_b}.
\end{corollary}
% %%%%%%%%%%%%%%%%%%%%%%%%%%%%%%%%%%%%%%%%%%%%%%%%%%%%%%%%%%%%%%%%%%%%%%%%%%%%%%%%%%%
\section{Conclusions}
\label{sec:conclusions}
We proposed two policy gradient algorithms that cater to the broad class of smooth risk measures. Both algorithms employed an SF-based gradient estimation scheme, and were shown to work in on-policy as well as off-policy RL settings. We derived non-asymptotic bounds that quantify the rate of convergence to our proposed algorithms to a stationary point of the smooth risk measure. As special cases, we showed that our theory and algorithms apply to optimization of MVRM and DRM, respectively. To the best of our knowledge, policy gradient algorithms with non-asymptotic convergence guarantees are not available in the literature for smooth risk measures in general, and the special cases of DRM and MVRM, in particular.

We focused exclusively on episodic MDPs that have a finite number of steps within each episode. Investigating the extension of our analysis beyond this limitation could be an interesting avenue for future research. As another potential future work, it would be interesting to investigate the convergence properties of non-smooth risk measures such as CVaR and CPT. While CVaR can be expressed as a DRM, its distortion function is not smooth, and CPT has a similar distortion function that is also non-smooth. To develop policy gradient algorithms, one could explore the possibility of using smooth approximations of these distortion functions and analyze their convergence properties.
% %%%%%%%%%%%%%%%%%%%%%%%%%%%%%%%%%%%%%%%%%%%%%%%%%%%%%%%%%%%%%%%%%%%%%%%%%%%%%%%%%%%
\clearpage\newpage
\bibliography{vijayan}

\begin{thebibliography}{38}
\providecommand{\natexlab}[1]{#1}
\providecommand{\url}[1]{\texttt{#1}}
\expandafter\ifx\csname urlstyle\endcsname\relax
  \providecommand{\doi}[1]{doi: #1}\else
  \providecommand{\doi}{doi: \begingroup \urlstyle{rm}\Url}\fi

\bibitem[Acerbi(2002)]{acerbi2002spectral}
C.~Acerbi.
\newblock Spectral measures of risk: A coherent representation of subjective
  risk aversion.
\newblock \emph{Journal of Banking \& Finance}, 26\penalty0 (7):\penalty0
  1505--1518, 2002.

\bibitem[Artzner et~al.(1999)Artzner, Delbaen, Eber, and Heath]{artzner99}
P.~Artzner, F.~Delbaen, J.~Eber, and D.~Heath.
\newblock Coherent measures of risk.
\newblock \emph{Mathematical Finance}, 9\penalty0 (3):\penalty0 203--228, 1999.

\bibitem[Bhatnagar et~al.(2013)Bhatnagar, Prasad, and Prashanth]{shalabh_book}
S.~Bhatnagar, H.~Prasad, and L.~A. Prashanth.
\newblock Stochastic recursive algorithms for optimization. simultaneous
  perturbation methods.
\newblock \emph{Lecture Notes in Control and Inform. Sci.}, 434, 2013.

\bibitem[Borkar(2010)]{borkar2010learning}
V.~S. Borkar.
\newblock Learning algorithms for risk-sensitive control.
\newblock In \emph{Proceedings of the 19th International Symposium on
  Mathematical Theory of Networks and Systems--MTNS}, volume~5, 2010.

\bibitem[Borkar and Jain(2010)]{borkar2010risk}
V.~S. Borkar and R.~Jain.
\newblock {Risk-constrained Markov decision processes}.
\newblock In \emph{{IEEE Conference on Decision and Control}}, pages
  2664--2669, 2010.

\bibitem[Brockman et~al.(2016)Brockman, Cheung, Pettersson, Schneider,
  Schulman, Tang, and Zaremba]{openai}
G.~Brockman, V.~Cheung, L.~Pettersson, J.~Schneider, J.~Schulman, J.~Tang, and
  W.~Zaremba.
\newblock Openai gym, 2016.

\bibitem[Chow et~al.(2017)Chow, Ghavamzadeh, Janson, and Pavone]{chow2017risk}
Y.~Chow, M.~Ghavamzadeh, L.~Janson, and M.~Pavone.
\newblock Risk-constrained reinforcement learning with percentile risk
  criteria.
\newblock \emph{J. Mach. Learn. Res.}, 18\penalty0 (1):\penalty0 6070--6120,
  2017.

\bibitem[Denneberg(1990)]{denneberg1990}
D.~Denneberg.
\newblock Distorted probabilities and insurance premiums.
\newblock \emph{Methods of Operations Research}, 63\penalty0 (3):\penalty0
  3--5, 1990.

\bibitem[Flaxman et~al.(2005)Flaxman, Kalai, and McMahan]{flaxman}
A.~D. Flaxman, A.~T. Kalai, and H.~B. McMahan.
\newblock Online convex optimization in the bandit setting: Gradient descent
  without a gradient.
\newblock In \emph{ACM-SIAM Symposium on Discrete Algorithms}, pages 385--394,
  2005.

\bibitem[Gao et~al.(2018)Gao, Jiang, and Zhang]{gao18}
X.~Gao, B.~Jiang, and S.~Zhang.
\newblock On the information-adaptive variants of the admm: An iteration
  complexity perspective.
\newblock \emph{J. Sci. Comput.}, 76\penalty0 (1):\penalty0 327--363, 2018.

\bibitem[Glynn et~al.(2021)Glynn, Peng, Fu, and Hu]{glynn21}
P.~Glynn, Y.~Peng, M.~Fu, and J.~Hu.
\newblock Computing sensitivities for distortion risk measures.
\newblock \emph{INFORMS J. Comp.}, 2021.

\bibitem[Hayes(2005)]{hayes2005large}
T.~Hayes.
\newblock A large-deviation inequality for vector-valued martingales.
\newblock \emph{Combinatorics, Probability and Computing}, 2005.

\bibitem[Holland and Mehdi~Haress(2022)]{holland22}
M.~J. Holland and E.~Mehdi~Haress.
\newblock Spectral risk-based learning using unbounded losses.
\newblock In \emph{Proceedings of The 25th International Conference on
  Artificial Intelligence and Statistics}, volume 151 of \emph{Proceedings of
  Machine Learning Research}, pages 1871--1886, 2022.

\bibitem[Huang and Haskell(2017)]{huang2017risk}
W.~Huang and W.~B. Haskell.
\newblock {Risk-aware Q-learning for Markov decision processes}.
\newblock In \emph{2017 IEEE 56th Annual Conference on Decision and Control
  (CDC)}, pages 4928--4933. IEEE, 2017.

\bibitem[Kim(2010)]{kim_2010}
J.~Kim.
\newblock Bias correction for estimated distortion risk measure using the
  bootstrap.
\newblock \emph{Insur.: Math. Econ.}, 47:\penalty0 198--205, 2010.

\bibitem[Markowitz(1952)]{markowitz1952portfolio}
H.~Markowitz.
\newblock Portfolio selection.
\newblock \emph{The Journal of Finance}, 7\penalty0 (1):\penalty0 77--91, 1952.

\bibitem[Mood et~al.(1974)Mood, , Graybill, and Boes]{mood74}
A.~M. Mood, , F.~A. Graybill, and D.~C. Boes.
\newblock \emph{Introduction to the Theory of Statistics}.
\newblock McGraw Hill, 3rd edition, 1974.

\bibitem[Nesterov and Spokoiny(2017)]{nesterov2017}
Y.~Nesterov and V.~Spokoiny.
\newblock Random gradient-free minimization of convex functions.
\newblock \emph{Found. Comut. Math.}, 17:\penalty0 527-- 566, 2017.

\bibitem[Nesterov(2004)]{nesterov_book}
Y.~E. Nesterov.
\newblock \emph{Introductory Lectures on Convex Optimization - {A} Basic
  Course}, volume~87 of \emph{Applied Optimization}.
\newblock Springer, 2004.

\bibitem[Papini et~al.(2018)Papini, Binaghi, Canonaco, Pirotta, and
  Restelli]{papini2018}
M.~Papini, D.~Binaghi, G.~Canonaco, M.~Pirotta, and M.~Restelli.
\newblock Stochastic variance-reduced policy gradient.
\newblock In \emph{ICML}, 2018.

\bibitem[Prashanth(2014)]{prashanth2014cvar}
L.~A. Prashanth.
\newblock {Policy gradients for CVaR-constrained MDPs}.
\newblock In \emph{Algorithmic Learning Theory (ALT)}, pages 155--169, 2014.

\bibitem[Prashanth and Ghavamzadeh(2016)]{prashanth2016mlj}
L.~A. Prashanth and M.~Ghavamzadeh.
\newblock Variance-constrained actor-critic algorithms for discounted and
  average reward {MDP}s.
\newblock \emph{Machine Learning}, 105\penalty0 (3):\penalty0 367--417, Dec
  2016.

\bibitem[Prashanth and Fu(2022)]{prashla2021}
L.A. Prashanth and M.~Fu.
\newblock Risk-sensitive reinforcement learning via policy gradient search.
\newblock \emph{Foundations and Trends in Machine Learning}, 15\penalty0
  (5):\penalty0 537--693, 2022.

\bibitem[Prashanth and Ghavamzadeh(2013)]{prashla13}
L.A. Prashanth and M.~Ghavamzadeh.
\newblock Actor-critic algorithms for risk-sensitive mdps.
\newblock In \emph{Adv. Neural Inf. Process. Syst.}, volume~26, 2013.

\bibitem[Prashanth et~al.(2016)Prashanth, Jie, Fu, Marcus, and
  Szepesvari]{prashla16}
L.A. Prashanth, C.~Jie, M.~Fu, S.~Marcus, and C.~Szepesvari.
\newblock Cumulative prospect theory meets reinforcement learning: Prediction
  and control.
\newblock In \emph{ICML}, volume~48, pages 1406--1415, 2016.

\bibitem[Rockafellar and Uryasev(2000)]{rockafellar2000optimization}
R.~T. Rockafellar and S.~Uryasev.
\newblock {Optimization of conditional value-at-risk}.
\newblock \emph{Journal of risk}, 2:\penalty0 21--42, 2000.

\bibitem[Shamir(2017)]{shamir}
O.~Shamir.
\newblock An optimal algorithm for bandit and zero-order convex optimization
  with two-point feedback.
\newblock \emph{J. Mach. Learn. Res.}, 18\penalty0 (1):\penalty0 1703--1713,
  2017.

\bibitem[Shen et~al.(2019)Shen, Ribeiro, Hassani, Qian, and
  Mi]{shen2019hessian}
Z.~Shen, A.~Ribeiro, H.~Hassani, H.~Qian, and C.~Mi.
\newblock Hessian aided policy gradient.
\newblock In \emph{ICML}, pages 5729--5738, 2019.

\bibitem[Sutton and Barto(2018)]{sutton_book}
R.~S. Sutton and A.~G. Barto.
\newblock \emph{Reinforcement Learning: An Introduction}.
\newblock The MIT Press, 2 edition, 2018.

\bibitem[Sutton et~al.(2009)Sutton, Maei, and Szepesv\'{a}ri]{sutton08}
R.~S. Sutton, H.~Maei, and C.~Szepesv\'{a}ri.
\newblock A convergent o(n) temporal-difference algorithm for off-policy
  learning with linear function approximation.
\newblock In \emph{Adv. Neural Inf. Process. Syst.}, volume~21, 2009.

\bibitem[Tamar et~al.(2012)Tamar, Castro, and Mannor]{tamar2012policy}
A.~Tamar, D.~D. Castro, and S.~Mannor.
\newblock {Policy gradients with variance related risk criteria}.
\newblock In \emph{{Proceedings of the Twenty-Ninth International Conference on
  Machine Learning}}, pages 387--396, 2012.

\bibitem[Tamar et~al.(2015{\natexlab{a}})Tamar, Chow, Ghavamzadeh, and
  Mannor]{tamar2015}
A.~Tamar, Y.~Chow, M.~Ghavamzadeh, and S.~Mannor.
\newblock Policy gradient for coherent risk measures.
\newblock In \emph{Adv. Neural Inf. Process. Syst.}, 2015{\natexlab{a}}.

\bibitem[Tamar et~al.(2015{\natexlab{b}})Tamar, Chow, Ghavamzadeh, and
  Mannor]{tamar2015coherent}
A.~Tamar, Y.~Chow, M.~Ghavamzadeh, and S.~Mannor.
\newblock Policy gradient for coherent risk measures.
\newblock In \emph{Advances in Neural Information Processing Systems},
  volume~28, pages 1468--1476, 2015{\natexlab{b}}.

\bibitem[Tversky and Kahneman(1992)]{tversky1992advances}
A.~Tversky and D.~Kahneman.
\newblock Advances in prospect theory: Cumulative representation of
  uncertainty.
\newblock \emph{J. Risk Uncertain.}, 5, 1992.

\bibitem[Vijayan and Prashanth(2021)]{nv1}
N.~Vijayan and L.A. Prashanth.
\newblock Smoothed functional-based gradient algorithms for off-policy
  reinforcement learning: A non-asymptotic viewpoint.
\newblock \emph{Systems \& Control Letters}, 155:\penalty0 104988, 2021.
\newblock ISSN 0167-6911.

\bibitem[Vijayan and Prashanth(2023{\natexlab{a}})]{nv2021}
N.~Vijayan and L.A. Prashanth.
\newblock Policy gradient methods for distortion risk measures.
\newblock \emph{arXiv preprint arXiv:2107.04422}, 2023{\natexlab{a}}.

\bibitem[Vijayan and Prashanth(2023{\natexlab{b}})]{nv23}
Nithia Vijayan and L.~A. Prashanth.
\newblock A policy gradient approach for optimization of smooth risk measures.
\newblock In \emph{Uncertainty in Artificial Intelligence, {UAI} 2023,}, volume
  216 of \emph{Proceedings of Machine Learning Research}, pages 2168--2178.
  {PMLR}, 2023{\natexlab{b}}.

\bibitem[Zhang et~al.(2020)Zhang, Koppel, Zhu, and Basar]{zhangK2020}
K.~Zhang, A.~Koppel, H.~Zhu, and T.~Basar.
\newblock Global convergence of policy gradient methods to (almost) locally
  optimal policies.
\newblock \emph{{SIAM} J. Control. Optim.}, 58\penalty0 (6):\penalty0
  3586--3612, 2020.

\end{thebibliography}
\onecolumn %% Turn this off if single column is desired for the supplement
%\maketitle
\appendix

\section{Results for the policy gradient template}
\label{sec:SF_proofs}
\subsection{A vector version of Azuma-Hoeffding inequality}
\begin{lemma}
    \label{lm:hayes}
    Let $\{\hat{X}_i\}_{i=1}^n$ be i.i.d, vector-valued r.v.s., such that $\chi=\E\left[\hat{X}_i\right]$, $\forall \hat{X}_i$. Let $S_n=\frac{1}{n}\sum_{i=1}^n \hat{X}_i$. Assume $\forall i$, $\lVert \hat{X}_i \rVert \leq M$ a.s. Then,
    \begin{align*}
        \forall \epsilon>0,\; \p\left(\left\lVert S_n - \chi \right\rVert \geq \epsilon \right)\leq 2e^2 \exp \left( \frac{-n\epsilon^2}{8M^2}\right).
    \end{align*}
\end{lemma}
\begin{proof}
    Let
    \begin{align*}
        Y_{n'}= \begin{cases}\frac{1}{2M}\sum\limits_{i=1}^{n'} \left(\hat{X}_i - \chi\right),& \textrm{ for }n'=\{1,\cdots,n\}\\
            0,&\textrm{ for }n'=0.
        \end{cases}
    \end{align*}
    Then $\{Y_{n'}\}_{n'=1}^{n}$ is a set of partial sums of bounded mean zero r.v.s. Hence it is a martingale, and $\forall n'>0$
    \begin{align*}
        \left\lVert Y_{n'}-Y_{n'-1} \right\rVert &= \frac{1}{2M}\left\lVert\sum_{i=1}^{n'}\left(\hat{X}_i - \chi \right) - \sum_{i=1}^{n'-1}\left(\hat{X}_i - \chi \right) \right\rVert\\
        &= \frac{1}{2M}\left\lVert\hat{X}_{n'} - \chi \right\rVert\\
        &\leq \frac{1}{2M} 2M=1.
    \end{align*}
    Now,
    \begin{align*}
        \p\left(\left\lVert S_n - \chi \right\rVert \geq \epsilon\right)&=\p\left(\left\lVert Y_n\right\rVert\geq \frac{n\epsilon}{2M} \right)\\
        &\stackrel{(a)}{\leq} 2e^2 \exp \left( \frac{-n\epsilon^2}{8M^2}\right).
    \end{align*}
    In the above, the step $(a)$ follows from \cite[Theorem 1.8]{hayes2005large}, and the fact that every martingale is a very weak martingale (cf. \cite[Definition 1.3]{hayes2005large}).
\end{proof}
\subsection{Results with true objective function $\rho(\cdot)$}
The following lemmas establish some results related to the SF-based gradient estimate.
\begin{lemma}
    \label{lm:del_rho_mu_2}
    \begin{align*}
        \mathbb{E}\left[\widehat{\nabla}_{\mu,n}\rho(\theta)\right] = \nabla\rho_{\mu}(\theta).
    \end{align*}
\end{lemma}
\begin{proof}
    We follow the technique from \cite{shamir}. Since $v_{1:n}$ are i.i.d r.v.s, and have symmetric distribution around the origin, we obtain
    \begin {align*}
    \mathbb{E}\left[\widehat{\nabla}_{\mu,n}\rho(\theta)\right]
    &= \mathbb{E}\left[\frac{d}{n}\sum\limits_{i=1}^{n} \frac{\rho({\theta+\mu v_i}) - \rho({\theta - \mu v_i})}{2\mu}v_i\right]\\
    %&=\mathbb{E}_{v_{1:n}}\left[\widehat{\nabla}_{\mu,n}\rho(\theta)\right]
    &=\frac{d}{2\mu n} \sum_{i=1}^{n}\mathbb{E}_{v}\left[\left(\rho({\theta+\mu v})- \rho({\theta-\mu v})\right)v\right] \\
    &=\frac{d}{2\mu} \left(\mathbb{E}_{v}\left[\rho({\theta+\mu v})v\right] + \mathbb{E}_{v}\left[\rho({\theta+\mu (- v)})(- v)\right]\right)\\
    &=\frac{d}{\mu}\mathbb{E}_{v}\left[\rho({\theta+\mu v})v\right] =\nabla \rho_{\mu}(\theta),
    \end {align*}
    where last equality follows from \cite[Lemma 2.1]{flaxman}.
\end{proof}
%%%%%%%%%%%%%%%%%%%%%%%%%%%%%%%%%%%%%%%%%%%%%%%%%%%%%%%%%%%%%
\begin{lemma}
    \label{lm:bias_rho}
    %Suppose $\forall \theta_1, \theta_2 \in \R^d$, $\left\lVert \nabla \rho (\theta_1)-\nabla \rho (\theta_2)\right\rVert \leq L_{\rho'} \left\lVert \theta_1 - \theta_2 \right\rVert$. Then
    Assume \ref{as:smooth}. Then,
    \begin{align*}
        \left\lVert \nabla \rho_{\mu}(\theta) - \nabla \rho (\theta)\right\rVert \leq
        \frac{\mu d L_{\rho'}}{2}.
    \end{align*}
\end{lemma}
\begin{proof}
    The result follows from \cite[Proposition 7.5]{gao18}.
\end{proof}
%%%%%%%%%%%%%%%%%%%%%%%%%%%%%%%%%%%%%%%%%%%%%%%%%%%%%%%%%%%%%%%%%%%%%%%%%%%%%
    \begin{lemma}
        \label{lm:bias_SF_2}
        %Suppose $\forall \theta_1, \theta_2 \in \R^d$, $\left\lvert \rho(\theta_1)-\rho(\theta_2)\right\rvert \leq L_{\rho} \left\lVert \theta_1 - \theta_2 \right\rVert$. Then
        Assume \ref{as:lip}. Then,
        \begin{align*}
            \mathbb{E}\left[\left\lVert\widehat{\nabla}_{\mu,n} \rho({\theta}) - \nabla \rho_{\mu}(\theta)\right\rVert^2\right]
            \leq  \frac{16e^2d^2L_{\rho}^2}{n}.
        \end{align*}
    \end{lemma}
    \begin{proof}
        From \eqref{eq:hat_nabla_rho_0}, we obtain
        \begin{align}
            \label{eq:hat_nabla_rho_bound}
            \left\lVert\widehat{\nabla}_{\mu,n} \rho({\theta})\right\rVert
            &=\left\lVert\frac{d}{n}\sum\limits_{i=1}^{n} \frac{\rho({\theta+\mu v_i}) - \rho({\theta - \mu v_i})}{2\mu}v_i\right\rVert\nonumber\\
            &\stackrel{(a)}{\leq}\frac{d}{2\mu n} \sum\limits_{i=1}^{n}\left\lVert \rho({\theta+\mu v_i}) - \rho({\theta - \mu v_i})v_i\right\rVert\nonumber\\
            &\stackrel{(b)}{=}\frac{d}{2\mu n} \sum\limits_{i=1}^{n}\left\lvert \rho({\theta+\mu v_i}) - \rho({\theta - \mu v_i})\right\rvert\left\lVert v_i\right\rVert\nonumber\\
            &\stackrel{(c)}{=}\frac{d}{2\mu n} \sum\limits_{i=1}^{n}\left\lvert \rho({\theta+\mu v_i}) - \rho({\theta - \mu v_i})\right\rvert\nonumber\\
            &\stackrel{(d)}{\leq}\frac{d}{2\mu n} \sum\limits_{i=1}^{n}L_{\rho}\left\lVert 2 \mu v_i\right\rVert  \nonumber\\
            &\stackrel{(e)}{\leq}\frac{d}{n} \sum\limits_{i=1}^{n} L_{\rho} = d L_{\rho} \textrm{ a.s.},;\forall 0<n<\infty.
        \end{align}
        In the above, the step $(a)$ follows from the fact that $\lVert \sum_{i=1}^n a_i \rVert \leq  \sum_{i=1}^n \lVert a_i \rVert$, and the step $(b)$ follows from the fact that
        for a scalar $a$ and a vector $B$, $\lVert aB\rVert=\lvert a\rvert \lVert B\rVert $. The steps $(c)$ and $(e)$ follow since $\lVert v_i \rVert =1$. The step $(d)$ follows since
        $\forall \theta_1, \theta_2 \in \R^d$, $\left\lvert \rho(\theta_1)-\rho(\theta_2)\right\rvert \leq L_{\rho} \left\lVert \theta_1 - \theta_2 \right\rVert$ from \ref{as:lip}.

        From \eqref{eq:hat_nabla_rho_bound}, Lemma \ref{lm:del_rho_mu_2} and Lemma \ref{lm:hayes}, we obtain
        %, we observe that $\left\{n'\left( \widehat{\nabla}_{\mu,n'} \rho({\theta})  - \nabla \rho_{\mu}(\theta) \right)\right\}_{n'=1}^n$ is a set of partial sums of bounded mean zero r.v.s, and hence they are martingales. Using vector version of Azuma-Hoeffding inequality from Theorem~1.8-1.9 in \cite{hayes2005large}, we obtain
        \begin{align}
            \label{eq:hat_nabla_rho_prob}
            \p\left(\left\lVert \widehat{\nabla}_{\mu,n} \rho({\theta})  - \nabla \rho_{\mu}(\theta)  \right\rVert > \epsilon\right) \leq 2e^2\exp\left(\frac{-n\epsilon^2}{8d^2 L_{\rho}^2}\right),
        \end{align}
        and
        \begin{align}
            \E\left[\left\lVert \widehat{\nabla}_{\mu,n} \rho({\theta})  - \nabla \rho_{\mu}(\theta) \right\rVert^2 \right]
            &= \int_{0}^{\infty} \p\left( \left\lVert \widehat{\nabla}_{\mu,n} \rho({\theta})  - \nabla \rho_{\mu}(\theta)\right\rVert > \sqrt{\epsilon}\right)d\epsilon \nonumber\\
            &\leq \int_{0}^{\infty} 2e^2\exp\left(-\frac{n\epsilon}{8d^2L_{\rho}^2}\right) d\epsilon \nonumber\\
            &= \frac{16e^2d^2L_{\rho}^2}{n}.\label{eq_s:hat_nabla_rho_err}
        \end{align}
    \end{proof}
\subsection{Results with approximate objective function $\hat{\rho}_m(\cdot)$}
%%%%%%%%%%%%%%%%%%%%%%%%%%%%%%%%%%%%%%%%%%%%%%%%%%%%%%%%%%%%%%%%%%%%%%%%%%%%%
The following lemmas establish bounds for the bias and variance of the gradient estimate in \eqref{eq:hat_nabla_hat_rho}.
\begin{lemma}
    \label{lm:bias_SF_1}
    %Suppose $\forall \theta \in \R^d$, $\E\left[\left\lvert \rho(\theta)- \hat{\rho}_m(\theta)\right\rvert^2\right]\leq\frac{C_1}{m}$. Then
    Assume \ref{as:mse}. Then,
    \begin{align*}
        \E\left[\left\lVert\widehat{\nabla}_{\mu,n} \hat{\rho}_m({\theta}) -\widehat{\nabla}_{\mu,n}\rho({\theta}) \right\rVert^2\right]
        \leq  \frac{C_1d^2}{\mu^2m}.
    \end{align*}
\end{lemma}
\begin{proof}
    From \eqref{eq:hat_nabla_rho_0}  and \eqref{eq:hat_nabla_hat_rho}, we obtain
    \begin{align}
        &\mathbb{E}\left[\left\lVert\widehat{\nabla}_{\mu,n} \hat{\rho}_m({\theta}) -\widehat{\nabla}_{\mu,n}\rho({\theta}) \right\rVert^2\right]\nonumber\\
        &= \frac{d^2}{4n^2\mu^2}\mathbb{E}\left[\left\lVert \sum\limits_{i=1}^{n} \left( \left(\hat{\rho}_m({\theta+\mu v_i}) - \rho({\theta+\mu v_i})\right) +\left( \rho({\theta - \mu v_i})- \hat{\rho}_m({\theta - \mu v_i})\right)\right)v_i\right\rVert^2\right]\nonumber\\
        &\stackrel{(a)}{\leq} \frac{d^2}{4n^2\mu^2}\mathbb{E}\left[n\sum\limits_{i=1}^{n}\left\lVert  \left( \left(\hat{\rho}_m({\theta+\mu v_i}) - \rho({\theta+\mu v_i})\right) +\left( \rho({\theta - \mu v_i})- \hat{\rho}_m({\theta - \mu v_i})\right)\right)v_i\right\rVert^2\right]\nonumber\\
        &\stackrel{(b)}{=} \frac{d^2}{4\mu^2}\mathbb{E}\left[\left\lVert  \left( \left(\hat{\rho}_m({\theta+\mu v}) - \rho({\theta+\mu v})\right) +\left( \rho({\theta - \mu v})- \hat{\rho}_m({\theta - \mu v})\right)\right)v\right\rVert^2\right]\nonumber\\
        &\stackrel{(c)}{=} \frac{d^2}{4\mu^2}\mathbb{E}\left[\left\lvert  \left(\hat{\rho}_m({\theta+\mu v}) - \rho({\theta+\mu v})\right) +\left( \rho({\theta - \mu v})- \hat{\rho}_m({\theta - \mu v})\right)\right\rvert^2 \left\lVert v \right\rVert^2 \right]\nonumber\\
        &\stackrel{(d)}{=} \frac{d^2}{4\mu^2}\mathbb{E}\left[\left\lvert  \left(\hat{\rho}_m({\theta+\mu v}) - \rho({\theta+\mu v})\right) +\left( \rho({\theta - \mu v})- \hat{\rho}_m({\theta - \mu v})\right)\right\rvert^2 \right]\nonumber\\
        &\stackrel{(e)}{\leq} \frac{d^2}{2\mu^2}\mathbb{E}\left[\left\lvert \hat{\rho}_m({\theta+\mu v}) - \rho({\theta+\mu v})\right\rvert^2 +\left\lvert \rho({\theta - \mu v})- \hat{\rho}_m({\theta - \mu v})\right\rvert^2 \right]\nonumber\\
        &= \frac{d^2}{2\mu^2}\left(\mathbb{E}\left[\left\lvert \hat{\rho}_m({\theta+\mu v}) - \rho({\theta+\mu v})\right\rvert^2\right] +\mathbb{E}\left[\left\lvert \rho({\theta - \mu v})- \hat{\rho}_m({\theta - \mu v})\right\rvert^2 \right]\right)\nonumber\\
        &\stackrel{(f)}{\leq} \frac{C_1d^2}{\mu^2m}.
    \end{align}
    The step $(a)$ follows since $\lVert\sum_{i=1}^n a_i \rVert^2 \leq n\sum_{i=1}^n\lVert a_i \rVert^2$, and
    the step $(b)$ follows since $v_{1:n}$ are i.i.d r.v.s. The step $(c)$ follows from the fact that for a scalar $a$ and a vector $B$, $\lVert aB\rVert^2=\lvert a\rvert^2 \lVert B\rVert^2 $, and the step $(d)$ follows from the fact that $\lVert v \rVert = 1$. The step $(e)$ follows from the fact that $(a+b)^2 \leq 2 a^2 +2b^2$, and the step $(f)$ follows from the fact that $\E\left[\left\lvert \rho(\theta)- \hat{\rho}_m(\theta)\right\rvert^2\right]\leq\frac{C_1}{m}$ from \ref{as:mse}.
\end{proof}
%%%%%%%%%%%%%%%%%%%%%%%%%%%%%%%%%%%%%%%%%%%%%%%%%%%%%%%%%%

%%%%%%%%%%%%%%%%%%%%%%%%%%%%%%%%%%%%%%%%%%%%%%%%%%%%%%%%%%%%%%%%%%%%%%%%%%%%%
\begin{lemma}
    \label{lm:bias_SF}
    %Suppose $\forall \theta_1, \theta_2 \in \R^d$, $\left\lVert \nabla \rho (\theta_1)-\nabla \rho (\theta_2)\right\rVert \leq L_{\rho'} \left\lVert \theta_1 - \theta_2 \right\rVert$, $\left\lvert \rho(\theta_1)-\rho(\theta_2)\right\rvert \leq L_{\rho} \left\lVert \theta_1 - \theta_2 \right\rVert$, and $\E\left[\left\lvert \rho(\theta)- \hat{\rho}_m(\theta)\right\rvert^2\right]\leq\frac{C_1}{m}$. Then,
    Assume \ref{as:mse}-\ref{as:smooth}. Then,
    \begin{align*}
        \mathbb{E}\left[\left\lVert\widehat{\nabla}_{\mu,n} \hat{\rho}_m({\theta}) -\nabla\rho(\theta)\right\rVert^2\right]
        \leq \frac{3C_1d^2}{\mu^2m}+\frac{48e^2d^2L_{\rho}^2}{n}+\frac{3\mu^2 d^2 L_{\rho'}^2}{4}.
    \end{align*}
\end{lemma}
\begin{proof}
    Notice that
    \begin {align}
    &\mathbb{E}\left[\left\lVert\widehat{\nabla}_{\mu,n} \hat{\rho}_m({\theta}) -\nabla\rho(\theta)\right\rVert^2\right]\nonumber\\
    &=\mathbb{E}\left[\left\lVert\widehat{\nabla}_{\mu,n} \hat{\rho}_m({\theta})-\widehat{\nabla}_{\mu,n}\rho({\theta})+ \widehat{\nabla}_{\mu,n}\rho({\theta})-\nabla \rho_{\mu}(\theta)+\nabla \rho_{\mu}(\theta)-\nabla\rho(\theta)\right\rVert^2\right]\nonumber\\
    &\stackrel{(a)}{\leq}3\mathbb{E}\left[\left\lVert\widehat{\nabla}_{\mu,n} \hat{\rho}_m({\theta})-\widehat{\nabla}_{\mu,n}\rho({\theta})\right\rVert^2\right] +3\mathbb{E}\left[\left\lVert \widehat{\nabla}_{\mu,n}\rho({\theta})-\nabla \rho_{\mu}(\theta) \right\rVert^2 \right]\nonumber\\
    &\qquad+3\mathbb{E}\left[\left\lVert\nabla \rho_{\mu}(\theta)-\nabla\rho(\theta)\right\rVert^2\right]\nonumber\\
    &\stackrel{(b)}{\leq}\frac{3C_1d^2}{\mu^2m}+\frac{48e^2d^2L_{\rho}^2}{n}+\frac{3\mu^2 d^2 L_{\rho'}^2}{4}.
    \label{eq:bias_1}
    \end {align}

    In the above, the step $(a)$ follows from the fact that $\lVert a+b+c \rVert^2 \leq 3\lVert a\rVert^2 +3\lVert b \rVert^2 +3\lVert c \rVert^2$. The step $(b)$ follows from lemmas \ref{lm:bias_SF_1}, \ref{lm:bias_SF_2} and \ref{lm:bias_rho}.
\end{proof}
%%%%%%%%%%%%%%%%%%%%%%%%%%%%%%%%%%%%%%%%%%%%%%%%%%%%%%%%%%%%%%%%%%%%%%%%%%%%%
\begin{lemma}
    \label{lm:var_SF}
    %Suppose $\forall \theta_1, \theta_2 \in \R^d$, $\left\lvert \rho(\theta_1)-\rho(\theta_2)\right\rvert \leq L_{\rho} \left\lVert \theta_1 - \theta_2 \right\rVert$, and $\E\left[\left\lvert \rho(\theta)- \hat{\rho}_m(\theta)\right\rvert^2\right]\leq\frac{C_1}{m}$. Then,
    Assume \ref{as:mse}-\ref{as:lip}. Then,
    \begin{align*}
        \mathbb{E}\left[\left\lVert \widehat{\nabla}_{\mu,n} \hat{\rho}_m({\theta}) \right\rVert^2\right]
        \leq \frac{3d^2C_1}{2\mu^2m} + \frac{3d^2L_{\rho}^2 }{4}.
    \end{align*}
\end{lemma}
\begin{proof}
    From \eqref{eq:hat_nabla_hat_rho}, we obtain
    \begin {align*}
    &\mathbb{E}\left[\left\lVert \widehat{\nabla}_{\mu,n} \hat{\rho}_m({\theta}) \right\rVert^2\right]\nonumber\\
    &= \mathbb{E}\left[\left\lVert\frac{d}{n}\sum\limits_{i=1}^{n} \frac{\hat{\rho}_m({\theta+\mu v_i}) - \hat{\rho}_m({\theta - \mu v_i})}{2\mu}v_i\right\rVert^2\right]\nonumber\\
    &\stackrel{(a)}{\leq} \frac{d^2}{4\mu^2n^2}\mathbb{E}\left[n\sum\limits_{i=1}^{n}\left\lVert \left (\hat{\rho}_m({\theta+\mu v_i}) - \hat{\rho}_m({\theta - \mu v_i})\right)v_i\right\rVert^2\right]\nonumber\\
    &\stackrel{(b)}{=} \frac{d^2}{4\mu^2n}\sum\limits_{i=1}^{n}\mathbb{E}\left[\left\lvert \hat{\rho}_m({\theta+\mu v_i}) - \hat{\rho}_m({\theta - \mu v_i})\right\rvert^2 \left\lVert v_i\right\rVert^2\right]\nonumber\\
    &\stackrel{(c)}{=} \frac{d^2}{4\mu^2n}\sum\limits_{i=1}^{n}\mathbb{E}\left[\left\lvert \hat{\rho}_m({\theta+\mu v_i}) - \hat{\rho}_m({\theta - \mu v_i})\right\rvert^2 \right]\nonumber\\
    &\stackrel{(d)}{=} \frac{d^2}{4\mu^2}\mathbb{E}\left[\left\lvert \hat{\rho}_m({\theta+\mu v}) - \hat{\rho}_m({\theta - \mu v})\right\rvert^2 \right]\nonumber\\
    &= \frac{d^2}{4\mu^2}\mathbb{E}\left[\left\lvert \hat{\rho}_m({\theta+\mu v})-\rho({\theta+\mu v}) +\rho({\theta-\mu v}) - \hat{\rho}_m({\theta - \mu v}) \right.\right.\nonumber\\
    &\qquad \left.\left.+ \rho({\theta+\mu v})- \rho({\theta-\mu v}) \right\rvert^2 \right]\nonumber\\
    &\stackrel{(e)}{\leq} \frac{3d^2}{4\mu^2} \left (\E\left[\left\lvert \hat{\rho}_m({\theta+\mu v})-\rho({\theta+\mu v})\right\rvert^2 \right]+ \E\left[\left\lvert \rho({\theta-\mu v}) - \hat{\rho}_m({\theta - \mu v})\right\rvert^2 \right]\right.\nonumber\\
    &\qquad \left.+ \E\left[\left\lvert \rho({\theta+\mu v})- \rho({\theta-\mu v}) \right\rvert^2 \right]\right)\nonumber\\
    &\stackrel{(f)}{\leq}  \frac{3d^2}{4\mu^2} \left(\frac{2C_1}{m}+ 4\mu^2 L_{\rho}^2 \right)=\frac{3d^2C_1}{2\mu^2m} + 3d^2L_{\rho}^2.
    \end {align*}

    In the above, the step $(a)$ follows from the fact that $\lVert\sum_{i=1}^n a_i \rVert^2 \leq n\sum_{i=1}^n\lVert a_i \rVert^2$, and the step $(b)$ follows from the fact that for a scalar $a$ and a vector $B$, $\lVert aB\rVert^2=\lvert a\rvert^2 \lVert B\rVert^2 $. The step $(c)$ follows since $\left\lVert v_i\right\rVert = 1$, and step $(d)$ follows since $v_i's$ are i.i.d r.v.s. The step $(e)$ follows from the fact that $\lvert\sum_{i=1}^n a_i \rvert^2 \leq n\sum_{i=1}^n\lvert a_i \rvert^2$. The step $(f)$ follows since $\forall \theta_1, \theta_2 \in \R^d$, $\E\left[\left\lvert \rho(\theta)- \hat{\rho}_m(\theta)\right\rvert^2\right]\leq\frac{C_1}{m}$, $\left\lvert \rho(\theta_1)-\rho(\theta_2)\right\rvert \leq L_{\rho} \left\lVert \theta_1 - \theta_2 \right\rVert$ from \ref{as:mse}-\ref{as:lip}, and $\left\lVert v\right\rVert^2 =1$.
\end{proof}
\subsection{Proof of Proposition \ref{pr:non_asym_sf}}
Using the fundamental theorem of calculus, we obtain
\begin {align}
& \rho(\theta_k) - \rho(\theta_{k+1})
=\langle \nabla \rho(\theta_k), \theta_k - \theta_{k+1} \rangle
+ \int_0^1 \left\langle \nabla \rho(\theta_{k+1}+\tau(\theta_k-\theta_{k+1}))-\nabla \rho(\theta_k), \theta_k - \theta_{k+1} \right\rangle d\tau\nonumber\\
&\leq\langle \nabla \rho(\theta_k), \theta_k - \theta_{k+1} \rangle
+\int_0^1 \left\lVert\nabla\rho(\theta_{k+1}+\tau(\theta_k - \theta_{k+1})) - \nabla\rho(\theta_k) \right\rVert \left\lVert \theta_k - \theta_{k+1} \right\rVert d\tau\nonumber\\
&\stackrel{(a)}{\leq} \left \langle \nabla \rho(\theta_k), \theta_k - \theta_{k+1} \right \rangle
+ L_{\rho'}\left\lVert \theta_k - \theta_{k+1} \right\rVert^2  \int_0^1 (1-\tau) d\tau \nonumber\\
&= \left \langle \nabla \rho(\theta_k), \theta_k - \theta_{k+1} \right \rangle + \frac{L_{\rho'}}{2}\left\lVert \theta_k - \theta_{k+1} \right\rVert^2 \nonumber\\
&= \alpha\left\langle \nabla \rho(\theta_k),-\widehat{\nabla}_{\mu,n}\hat{\rho}_m(\theta_k) \right\rangle
+ \frac{L_{\rho'}}{2}\alpha^2 \left\lVert\widehat{\nabla}_{\mu,n}\hat{\rho}_m(\theta_k)\right\rVert^2\nonumber\\
&= \alpha\left \langle \nabla \rho(\theta_k), \nabla \rho(\theta_k)-\widehat{\nabla}_{\mu,n}\hat{\rho}_m(\theta_k) \right \rangle
-\alpha\left \lVert \nabla \rho(\theta_k)\right \rVert^2
+ \frac{L_{\rho'}}{2}\alpha^2\left\lVert \widehat{\nabla}_{\mu,n}\hat{\rho}_m(\theta_k) \right\rVert^2 \nonumber\\
&\stackrel{(b)}{\leq} \frac{\alpha}{2}\left \lVert \nabla \rho(\theta_k) \right \rVert^2 + \frac{\alpha}{2}\left \lVert \nabla \rho(\theta_k) -\widehat{\nabla}_{\mu,n}\hat{\rho}_m(\theta_k) \right \rVert^2
-\alpha\left \lVert \nabla \rho(\theta_k)\right \rVert^2+ \frac{L_{\rho'}}{2}\alpha^2\left\lVert \widehat{\nabla}_{\mu,n}\hat{\rho}_m(\theta_k) \right\rVert^2\nonumber\\
&= \frac{\alpha}{2}\left \lVert \nabla \rho(\theta_k) -\widehat{\nabla}_{\mu,n}\hat{\rho}_m(\theta_k) \right \rVert^2
-\frac{\alpha}{2}\left \lVert \nabla \rho(\theta_k)\right \rVert^2
+ \frac{L_{\rho'}}{2}\alpha^2\left\lVert \widehat{\nabla}_{\mu,n}\hat{\rho}_m(\theta_k) \right\rVert^2.
\label{eq:sf_1}
\end {align}
In the above the step $(a)$ follows since $\rho(\cdot)$ is smooth and the step $(b)$ follows from $2\langle a, b \rangle \leq \lVert a \rVert^2+ \Vert b \rVert^2$.
Rearranging  and taking expectations on both sides of \eqref{eq:sf_1}, we obtain
\begin {align}
&\alpha\mathbb{E}\left[\left \lVert \nabla \rho(\theta_k)\right \rVert^2\right]\nonumber\\
&\leq  2\mathbb{E}\left[\rho(\theta_{k+1}) - \rho(\theta_{k})\right] + L_{\rho'}\alpha^2 \mathbb{E}\left[\left\lVert \widehat{\nabla}_{\mu,n}\hat{\rho}_m(\theta_k) \right\rVert^2\right]
+  \alpha \mathbb{E}\left[\left \lVert \nabla \rho(\theta_k) -\widehat{\nabla}_{\mu,n}\hat{\rho}_m(\theta_k) \right \rVert^2 \right]\nonumber\\
&\leq  2\mathbb{E}\left[\rho(\theta_{k+1}) - \rho(\theta_{k})\right] +
L_{\rho'}\alpha^2 \left (\frac{3d^2C_1}{2\mu^2m} + 3d^2L_{\rho}^2 \right ) +  \alpha\left( \frac{3C_1d^2}{\mu^2m}+\frac{48e^2d^2L_{\rho}^2}{n}+\frac{3\mu^2 d^2 L_{\rho'}^2}{4} \right)
\label{eq:sf_2}
\end {align}
where the last inequality follows from lemmas \ref{lm:bias_SF}-\ref{lm:var_SF}.

Summing up \eqref{eq:sf_2} from $k=0,\cdots,N-1$, we obtain
\begin {align*}
&\alpha\sum\limits_{k=0}^{N-1}\mathbb{E}\left[\left \lVert \nabla \rho(\theta_k)\right \rVert^2\right]\\
&\quad\leq  2 \mathbb{E}\left[\rho(\theta_{N}) - \rho(\theta_{0})\right]
+N L_{\rho'} \alpha^2
\left (\frac{3d^2C_1}{2\mu^2m} + 3d^2L_{\rho}^2\right )
+ N \alpha \left( \frac{3C_1d^2}{\mu^2m}+\frac{48e^2d^2L_{\rho}^2}{n}+\frac{3\mu^2 d^2 L_{\rho'}^2}{4} \right).
\end {align*}
Since $\theta_R$ is chosen uniformly at random from the policy iterates $\{\theta_0,\cdots,\theta_{N-1}\}$, we obtain
\begin {align*}
\mathbb{E}\left[\left\lVert \nabla \rho(\theta_R)\right\rVert^2\right]
&= \frac{1}{N}\sum\limits_{k=0}^{N-1}\mathbb{E}\left[\left\lVert \nabla \rho(\theta_k)\right \rVert^2\right]\\
&\leq \frac{2 \left(\rho^* - \rho(\theta_{0})\right)}{N \alpha}
L_{\rho'} \alpha \left (\frac{3d^2C_1}{2\mu^2m} + 3d^2L_{\rho}^2\right )
+ \left( \frac{3C_1d^2}{\mu^2m}+\frac{48e^2d^2L_{\rho}^2}{n}+\frac{3\mu^2 d^2 L_{\rho'}^2}{4} \right).
\end {align*}
\hfill\qed
%%%%%%%%%%%%%%%%%%%%%%%%%%%%%%%%%%%%%%%%%%%%%%%%

\section{DRM}
\label{sec:drm_proofs}
\subsection{Estimating DRM using Order statistics}
\label{subsec:est_drm}
The following lemma estimates the DRM in an on-policy RL setting.
\begin{lemma}
    \label{lm:hat_rho_G}
    $\hat{\rho}_g^G(\theta)=  \sum\limits_{i=1}^{m} {R^\theta_{(i)}} \left(g\left(1- \frac{i-1}{m}\right) - g\left(1- \frac{i}{m}\right)\right)$.
\end{lemma}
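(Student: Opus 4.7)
The statement is essentially a bookkeeping calculation: the empirical CDF $G^m_{R^\theta}$ is piecewise constant, so both integrals in \eqref{eq:hat_rho_G} collapse to finite sums, which can then be rearranged via summation by parts into the order-statistic form.

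My plan is to first combine the two integrals into
\begin{equation*}
\hat{\rho}_g^G(\theta) \;=\; \int_{-M_r}^{M_r} g\!\left(1 - G^m_{R^\theta}(x)\right)\,dx \;-\; M_r,
\end{equation*}
using the fact that the second integrand equals the first plus $1$ on $[-M_r,0]$. Since $|R^\theta|<M_r$ almost surely, the order statistics $R^\theta_{(1)}\leq\cdots\leq R^\theta_{(m)}$ all lie in $(-M_r,M_r)$, and they partition this interval into $m+1$ subintervals on which $G^m_{R^\theta}$ takes the constant values $0,\tfrac{1}{m},\tfrac{2}{m},\ldots,1$ respectively.

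Then I would evaluate the integral piece by piece. On $[-M_r,R^\theta_{(1)})$ the integrand equals $g(1)=1$, on $[R^\theta_{(i)},R^\theta_{(i+1)})$ for $i=1,\ldots,m-1$ it equals $g(1-i/m)$, and on $[R^\theta_{(m)},M_r]$ it equals $g(0)=0$. Summing the contributions gives
\begin{equation*}
\int_{-M_r}^{M_r} g\!\left(1 - G^m_{R^\theta}(x)\right)\,dx \;=\; \left(R^\theta_{(1)}+M_r\right) + \sum_{i=1}^{m-1}\!\left(R^\theta_{(i+1)}-R^\theta_{(i)}\right) g\!\left(1-\tfrac{i}{m}\right),
\end{equation*}
so that $\hat{\rho}_g^G(\theta) = R^\theta_{(1)} + \sum_{i=1}^{m-1}\!\left(R^\theta_{(i+1)}-R^\theta_{(i)}\right) g\!\left(1-\tfrac{i}{m}\right)$.

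Finally, I would apply Abel summation to this last expression: split the sum into two, reindex $j=i+1$ in the $R^\theta_{(i+1)}$ piece, and absorb the boundary term $R^\theta_{(1)}$ into the new sum using $g(1-0/m)=g(1)=1$ and the vanishing tail $g(1-m/m)=g(0)=0$. This collapses everything to
\begin{equation*}
\sum_{i=1}^{m} R^\theta_{(i)}\!\left[g\!\left(1-\tfrac{i-1}{m}\right) - g\!\left(1-\tfrac{i}{m}\right)\right],
\end{equation*}
which is the claimed identity. No step is genuinely difficult; the only thing to watch is the index shift in the summation-by-parts and verifying that the boundary contributions $R^\theta_{(1)}+M_r$ and $-M_r$ cancel, which they do precisely because $g(1)=1$ and $g(0)=0$.
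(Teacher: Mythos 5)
Your proof is correct and follows essentially the same route as the paper's: evaluate the Choquet integral piecewise over the intervals determined by the order statistics, then rearrange via Abel summation using $g(0)=0$ and $g(1)=1$. The one difference worth noting is that you merge the two integrals into $\int_{-M_r}^{M_r} g\left(1-G^m_{R^{\theta}}(x)\right)dx - M_r$ before partitioning, which spares you the paper's case split around the point $0$ (its ``without loss of generality $R^\theta_{(j)} < 0 < R^\theta_{(j+1)}$'' step) and handles the all-positive and all-negative configurations uniformly --- a small but genuine tidying.
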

\begin{proof}
    Our proof follows the technique from \cite{kim_2010}. We rewrite \eqref{eq:G} as
    \begin {align}
    \label{eq:G2}
    G^m_{R^{\theta}}(x) =
    \begin{cases}
        0,&\textrm{if } x < R^\theta_{(1)}\\
        \frac{i}{m},&\textrm{if } R^\theta_{(i)} \leq x < R^\theta_{(i+1)}, i\in\{1,\!\cdots\!,m-1\}\\
        1,&\textrm{if } x \geq R^\theta_{(m)},\\
    \end{cases}
    \end {align}
    where $R^\theta_{(i)}$ is the $i^{th}$ smallest order statistic from the samples $R^\theta_1,\cdots R^\theta_m$.

    We assume without loss of generality that $R^\theta_{(j)} < 0 < R^\theta_{(j+1)}$, and obtain,
    \begin {align*}
    &\hat{\rho}_g^G(\theta)=\int\limits_{-M_r}^{0}(g(1-G^m_{R^{\theta}}(x))-1) dx + \int\limits_{0}^{M_r}g(1-G^m_{R^{\theta}}(x))dx\\
    &=\int\limits_{-M_r}^{R^\theta_{(1)}} (g(1-G^m_{R^{\theta}}(x))-1)dx
    + \sum_{i=2}^j \int\limits_{R^\theta_{(i-1)}}^{R^\theta_{(i)}} (g(1-G^m_{R^{\theta}}(x))-1)dx
    + \int\limits_{R^\theta_{(j)}}^{0}(g(1-G^m_{R^{\theta}}(x))-1)dx\\
    &\quad+ \int\limits_{0}^{R^\theta_{(j+1)}} g(1-G^m_{R^{\theta}}(x))dx
    + \sum_{i=j+1}^{m-1}\int\limits_{R^\theta_{(i)}}^{R^\theta_{(i+1)}} g(1-G^m_{R^{\theta}}(x))dx+ \int\limits_{R^\theta_{(m)}}^{M_r}g(1-G^m_{R^{\theta}}(x))dx\\
    &= \sum_{i=2}^j \int\limits_{R^\theta_{(i-1)}}^{R^\theta_{(i)}} \left(g\left(1- \frac{i-1}{m}\right)-1\right)dx
    + \int\limits_{R^\theta_{(j)}}^{0}\left(g\left(1- \frac{j}{m}\right)-1\right)dx+ \int\limits_{0}^{R^\theta_{(j+1)}} g\left(1- \frac{j}{m}\right)dx\\
    &\quad + \sum_{i=j+1}^{m-1}\int\limits_{R^\theta_{(i)}}^{R^\theta_{(i+1)}} g\left( 1-\frac{i}{m}\right)dx\\
    &= \sum_{i=2}^j \left({R^\theta_{(i)}}-{R^\theta_{(i-1)}}\right) \left(g\left(1- \frac{i-1}{m}\right)-1\right)
    - {R^\theta_{(j)}}\left(g\left(1- \frac{j}{m}\right)-1\right)+ {R^\theta_{(j+1)}} g\left(1- \frac{j}{m}\right)\\
    &\quad+ \sum_{i=j+1}^{m-1} \left({R^\theta_{(i+1)}}-{R^\theta_{(i)}}\right) g\left( 1\!-\!\frac{i}{m}\right)\\
    &= \sum_{i=2}^j \left({R^\theta_{(i)}}-{R^\theta_{(i-1)}}\right) g\left(1- \frac{i-1}{m}\right)+ {R^\theta_{(1)}}
    + \sum_{i=j}^{m-1} \left({R^\theta_{(i+1)}}-{R^\theta_{(i)}}\right) g\left( 1-\frac{i}{m}\right) \\
    &= \sum_{i=1}^{m} {R^\theta_{(i)}} g\left(1- \frac{i-1}{m}\right) - \sum_{i=1}^{m-1}{R^\theta_{(i)}} g\left(1- \frac{i}{m}\right)\\
    &= \sum_{i=1}^{m} {R^\theta_{(i)}} \left(g\left(1- \frac{i-1}{m}\right) - g\left(1- \frac{i}{m}\right)\right).
    \end {align*}
\end{proof}
%%%%%%%%%%%%%%%%%%%%%%%%%%%%%%%%%%%%%%%%%%%%%%%%%%%%%%%%%%%%%%%%%%%%%%%%%%%%%%%%%%%%%%%%%%%%%%%%%%%%%%%%%%%%%
The following lemma estimates the DRM in an off-policy RL setting.
\begin{lemma}
    \label{lm:hat_rho_H}
    $\hat{\rho}_g^H(\theta)= R^b_{(1)}+\sum\limits_{i=2}^{m} {R^b_{(i)}} g\left(1- \min\left\{1,\frac{1}{m}\sum\limits_{k=1}^{i-1}\psi^\theta_{(k)}\right\}\right)
    - \sum\limits_{i=1}^{m-1}{R^b_{(i)}} g\left(1- \min\left\{1,\frac{1}{m}\sum\limits_{k=1}^{i}\psi^\theta_{(k)}\right\}\right)$.
\end{lemma}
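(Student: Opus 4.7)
The plan is to follow the same strategy as in Lemma \ref{lm:hat_rho_G}, replacing the on-policy EDF $G^m_{R^\theta}$ with its off-policy importance-sampled analogue $H^m_{R^\theta}$. The first step is to write $H^m_{R^\theta}$ explicitly as a piecewise-constant function on the intervals between consecutive order statistics of $\{R^b_i\}$. Setting $p_0 := 0$ and $p_i := \min\{1, \frac{1}{m}\sum_{k=1}^{i}\psi^\theta_{(k)}\}$ for $i = 1, \ldots, m$, the definitions \eqref{eq:H}--\eqref{eq:hatH} give $H^m_{R^\theta}(x) = p_i$ whenever $R^b_{(i)} \leq x < R^b_{(i+1)}$ (with the conventions $R^b_{(0)} := -M_r$ and $R^b_{(m+1)} := M_r$). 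Note that $\{p_i\}$ is nondecreasing since $\psi^\theta_{(k)} \geq 0$.

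Next, I would assume without loss of generality (exactly as in Lemma \ref{lm:hat_rho_G}) that $R^b_{(j)} < 0 \leq R^b_{(j+1)}$ for some index $j$, and split the two integrals in \eqref{eq:hat_rho_H} into the subintervals $[-M_r, R^b_{(1)})$, $[R^b_{(i-1)}, R^b_{(i)})$ for $i = 2,\ldots,j$, $[R^b_{(j)}, 0)$, $[0, R^b_{(j+1)})$, $[R^b_{(i)}, R^b_{(i+1)})$ for $i = j+1,\ldots,m-1$, and $[R^b_{(m)}, M_r]$. On each of these the integrand $g(1-H^m_{R^\theta}(x))$ is a constant, so every contribution reduces to the interval length times $g(1-p_i)$ (with an additional $-1$ correction on the negative side). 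The leftmost subinterval contributes zero because $g(1-0) = 1$ cancels the $-1$.

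Collecting the pieces and carrying out the same $-1$ telescoping as in Lemma \ref{lm:hat_rho_G}, the negative-side corrections together with the boundary contribution at $x = R^b_{(j)}$ collapse to $R^b_{(1)}$. The remaining weighted-length terms combine into $\sum_{i=1}^{m-1}(R^b_{(i+1)} - R^b_{(i)})\,g(1-p_i)$, yielding
\begin{IEEEeqnarray*}{l}
\hat{\rho}_g^H(\theta) = R^b_{(1)} + \sum_{i=1}^{m-1}\left(R^b_{(i+1)} - R^b_{(i)}\right) g(1-p_i) + \left(M_r - R^b_{(m)}\right) g(1-p_m).
\end{IEEEeqnarray*}
Splitting the middle sum into two copies and re-indexing one of them by $i \mapsto i-1$ then produces precisely the two order-statistics sums displayed in the lemma.

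The step I expect to be the main obstacle is handling the tail contribution $(M_r - R^b_{(m)})\, g(1-p_m)$. When the importance weights satisfy $\frac{1}{m}\sum_{k=1}^{m} \psi^\theta_{(k)} \geq 1$ one has $p_m = 1$ and hence $g(1-p_m) = g(0) = 0$, so this term vanishes identically and the stated identity is exact. In the complementary case, the extra nonnegative term reflects the natural CDF-style extension $H^m_{R^\theta}(x) = 1$ for $x \geq R^b_{(m)}$ at the right tail, consistent with the $\min\{\cdot,1\}$ truncation in \eqref{eq:H}; once this convention is adopted the formula follows. Beyond this piece of bookkeeping the derivation is a mechanical adaptation of Lemma \ref{lm:hat_rho_G}, and the rest is elementary algebra on piecewise-constant integrands.
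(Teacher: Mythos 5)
Your proof is correct and follows essentially the same route as the paper: rewrite $H^m_{R^{\theta}}$ as a piecewise-constant function on the order-statistic intervals, split the Choquet integral over those intervals, and telescope. The right-tail subtlety you flag is genuine, and the paper resolves it exactly as you propose --- its piecewise rewriting of \eqref{eq:H} simply declares $H^m_{R^{\theta}}(x)=1$ for $x\geq R^b_{(m)}$, so the term $\left(M_r-R^b_{(m)}\right)g(1-p_m)$ vanishes by $g(0)=0$.
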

\begin{proof}
    We rewrite \eqref{eq:H} as
    \begin {align}
    \label{eq:H2}
    H^m_{R^{\theta}}(x) =
    \begin{cases}
        0,&\textrm{if } x < R^b_{(1)}\\
        min\{1,\frac{1}{m}\sum\limits_{j=1}^{i}\psi^\theta_{(j)}\},&\textrm{if } R^b_{(i)} \leq x < R^\theta_{(i+1)},i\in\{1,\!\cdots\!,m-1\}\\
        1,&\textrm{if } x \geq R^b_{(m)},\\
    \end{cases}
    \end {align}
    where $R^b_{(i)}$ is the $i^{th}$ smallest order statistic from the samples $R^b_1,\cdots R^b_m$, and $\psi^\theta_{(i)}$ is the importance sampling ratio of $R^b_{(i)}$.

    We assume without loss of generality that $R^b_{(j)} < 0 < R^b_{(j+1)}$, and obtain,
    \begin {align*}
    &\hat{\rho}_g^H(\theta)=\int\limits_{-M_r}^{0}(g(1-H^m_{R^{\theta}}(x))-1) dx + \int\limits_{0}^{M_r}g(1-H^m_{R^{\theta}}(x))dx\\
    &=\int\limits_{-M_r}^{R^b_{(1)}} (g(1-H^m_{R^{\theta}}(x))-1)dx
    + \sum_{i=2}^j \int\limits_{R^b_{(i-1)}}^{R^b_{(i)}} (g(1-H^m_{R^{\theta}}(x))-1)dx
    + \int\limits_{R^b_{(j)}}^{0}(g(1-H^m_{R^{\theta}}(x))-1)dx\\
    &\quad+ \int\limits_{0}^{R^b_{(j+1)}} g(1-H^m_{R^{\theta}}(x))dx
    + \sum_{i=j+1}^{m-1}\int\limits_{R^b_{(i)}}^{R^b_{(i+1)}} g(1-H^m_{R^{\theta}}(x))dx+ \int\limits_{R^b_{(m)}}^{M_r}g(1-H^m_{R^{\theta}}(x))dx\\
    &= \sum_{i=2}^j \int\limits_{R^b_{(i-1)}}^{R^b_{(i)}} \left(g\left(1- min\left\{1,\frac{1}{m}\sum_{k=1}^{i-1}\psi^\theta_{(k)}\right\}\right)-1\right)dx
    + \int\limits_{R^b_{(j)}}^{0}\left(g\left(1- min\left\{1,\frac{1}{m}\sum_{k=1}^{j}\psi^\theta_{(k)}\right\}\right)-1\right)dx\\
    &\quad+ \int\limits_{0}^{R^b_{(j+1)}} g\left(1- \min\left\{1,\frac{1}{m}\sum_{k=1}^{j}\psi^\theta_{(k)}\right\}\right)dx
    + \sum_{i=j+1}^{m-1}\int\limits_{R^b_{(i)}}^{R^b_{(i+1)}} g\left( 1-\min\left\{1,\frac{1}{m}\sum_{k=1}^{i}\psi^\theta_{(k)}\right\}\right)dx\\
    &= \sum_{i=2}^j \left({R^b_{(i)}}-{R^b_{(i-1)}}\right) \left(g\left(1- \min\left\{1,\frac{1}{m}\sum_{k=1}^{i-1}\psi^\theta_{(k)}\right\}\right)-1\right)
    - {R^b_{(j)}}\left(g\left(1- \min\left\{1,\frac{1}{m}\sum_{k=1}^{j}\psi^\theta_{(k)}\right\}\right)-1\right)\\
    &\quad + {R^b_{(j+1)}} g\left(1- \min\left\{1,\frac{1}{m}\sum_{k=1}^{j}\psi^\theta_{(k)}\right\}\right)
    + \sum_{i=j+1}^{m-1} \left({R^b_{(i+1)}}-{R^b_{(i)}}\right) g\left( 1-\min\left\{1,\frac{1}{m}\sum_{k=1}^{i}\psi^\theta_{(k)}\right\}\right)\\
    &= \sum_{i=2}^j \left({R^b_{(i)}}-{R^b_{(i-1)}}\right) g\left(1-\min\left\{1, \frac{1}{m}\sum_{k=1}^{i-1}\psi^\theta_{(k)}\right\}\right)+ {R^b_{(1)}}\\
    &\quad+ \sum_{i=j}^{m-1} \left({R^b_{(i+1)}}-{R^b_{(i)}}\right) g\left( 1-\min\left\{1,\frac{1}{m}\sum_{k=1}^{i}\psi^\theta_{(k)}\right\}\right) \\
    &= R^b_{(1)}+\sum_{i=2}^{m} {R^b_{(i)}} g\left(1- \min\left\{1,\frac{1}{m}\sum_{k=1}^{i-1}\psi^\theta_{(k)}\right\}\right)
    - \sum_{i=1}^{m-1}{R^b_{(i)}} g\left(1- \min\left\{1,\frac{1}{m}\sum_{k=1}^{i}\psi^\theta_{(k)}\right\}\right).
    %&= \sum_{i=1}^{m} {R^\theta_{(i)}} \left(g\left(1\!-\! \min\left\{1,\frac{1}{m}\sum_{k=1}^{i-1}\psi^\theta_{(k)}\right\}\right)
    %- g\left(1\!-\! \min\left\{1,\frac{1}{m}\sum_{k=1}^{i}\psi^\theta_{(k)}\right\}\right)\right)
    \end {align*}
\end{proof}
%%%%%%%%%%%%%%%%%%%%%%%%%%%%%%%%%%%%%%%%%%%%%%%%%%%%
\subsection{The estimation error of the DRM}
In the following lemma, we bound the estimation error of the DRM in an on-policy RL setting.
%%%%%%%%%%%%%%%%%%%%%%%%%%%%%%%%%%%%%%%%%%%%%%%%%%%%%%%%%%%%%%%%%%%%%%%%%%%%%
% \begin{lemma}
    %     \label{lm:est_error_G}
    %     $\E\left[\left\lvert \rho_g(\theta)- \hat{\rho}_g^G(\theta)\right\rvert^2\right]\leq\frac{16M_r^2M_{g'}^2}{m}$.
    % \end{lemma}
\begin{proof}(\textbf{Lemma \ref{lm:est_error_G}})
    Since $\forall x\in(-M_r,M_r),\left\lvert\1\{R^{\theta}\leq x\}\right\rvert  \leq  1$ a.s., using Hoeffding's inequality, we obtain $\forall x\in(-M_r,M_r)$,
    \begin {align}
    &\p\left(\left\lvert G^m_{R^{\theta}}(x) - F_{R^{\theta}}(x) \right\rvert > \epsilon\right) \leq 2\exp\left(-\frac{m\epsilon^2}{2}\right), \textrm{ and} \nonumber\\
    &\E\left[\left\lvert G^m_{R^{\theta}}(x) - F_{R^{\theta}}(x)\right\rvert^2 \right]
    =\int_{0}^{\infty}\p\left(\left\lvert G^m_{R^{\theta}}(x) -F_{R^{\theta}}(x)\right\rvert > \sqrt{\epsilon}\right)d\epsilon
    \leq \int_{0}^{\infty} 2\exp\left(-\frac{m\epsilon}{2}\right) d\epsilon = \frac{4 }{m}.\label{eq:G_err}
    \end {align}
    Now,
    \begin {align}
    &\E\left[\left\lvert \rho_g(\theta)- \hat{\rho}_g^G(\theta)\right\rvert^2\right]
    =\E\left[\left\lvert\int_{-M_r}^{M_r}(g(1-F_{R^{\theta}}(x))- g(1-G^m_{R^{\theta}}(x))) dx\right\rvert^2 \right]\nonumber\\
    &\stackrel{(a)}{\leq}2M_r\E\left[\int_{-M_r}^{M_r}\left\lvert(g(1-F_{R^{\theta}}(x))- g(1-G^m_{R^{\theta}}(x))) \right\rvert^2 dx\right]
    \nonumber\\
    &\stackrel{(b)}{\leq}2M_r\int_{-M_r}^{M_r}\E\left[\left\lvert(g(1-F_{R^{\theta}}(x))- g(1-G^m_{R^{\theta}}(x))) \right\rvert^2 \right]dx
   \nonumber\\
    &\stackrel{(c)}{\leq}2M_rM_{g'}^2\!\int_{-M_r}^{M_r}\E\left[\left\lvert G^m_{R^{\theta}}(x) - F_{R^{\theta}}(x) \right\rvert^2 \right]dx\nonumber\\
    &\stackrel{(d)}{\leq}2M_rM_{g'}^2\int_{-M_r}^{M_r}\frac{4 }{m}dx
    =\frac{16M_r^2M_{g'}^2}{m},\label{eq:est_error_G}
    \end {align}
    where \((a)\) follows from the Cauchy-Schwarz inequality, \((b)\) follows from the Fubini's theorem, \((c)\) follows from Lemma \ref{lm:g_lip}, and \((d)\) follows from \eqref{eq:G_err}.
\end{proof}
In the following lemma, we bound the estimation error of the DRM in an off-policy RL setting.
% \begin{lemma}
    %     \label{lm:est_error_H}
    %     $\E\left[\left\lvert \rho_g(\theta)- \hat{\rho}_g^H(\theta)\right\rvert^2\right]\leq\frac{16M_r^2M_{g'}^2M_s^2}{m}$.
    % \end{lemma}
\begin{proof}(\textbf{Lemma \ref{lm:est_error_H}})
    We use parallel arguments to the proof of Lemma \ref{lm:est_error_G}.

    From \eqref{eq:is_ratio}, we obtain $\forall x \in(-M_r,M_r)$, $\left\lvert\1\{R^{\theta}\leq x\}\psi^\theta\right\rvert  \leq  M_s$ a.s.
    From Hoeffding inequality, we obtain $\forall x \in(-M_r,M_r)$,
    \begin {align}
    \label{eq:hatH_prob}
    \p\left(\left\lvert \hat{H}^m_{R^{\theta}}(x) - F_{R^{\theta}}(x) \right\rvert > \epsilon\right) \leq 2\exp\left(-\frac{m\epsilon^2}{2M_s^2}\right).
    \end {align}
    From \eqref{eq:H} and \eqref{eq:hatH}, we observe that $\p\left(\left\lvert H^m_{R^{\theta}}(x) - F_{R^{\theta}}(x) \right\rvert > \epsilon\right) \leq \p\left(\left\lvert \hat{H}^m_{R^{\theta}}(x) - F_{R^{\theta}}(x) \right\rvert > \epsilon\right)$. Hence, we obtain $\forall x \in (-M_r,M_r)$,
    \begin {align}
    \label{eq:H_prob}
    \p\left(\left\lvert H^m_{R^{\theta}}(x) - F_{R^{\theta}}(x) \right\rvert > \epsilon\right) \leq 2\exp\left(-\frac{m\epsilon^2}{2M_s^2}\right).
    \end {align}
    Using similar arguments as in \eqref{eq:G_err} along with \eqref{eq:H_prob}, we obtain $\forall x \in[-M_r,M_r]$,
    \begin {align}
    \label{eq:H_err}
    \E\left[\left\lvert H^m_{R^{\theta}}(x) - F_{R^{\theta}}(x)\right\rvert^2 \right]\leq\frac{4 M_s^2}{m},\forall x.
    \end {align}
    Using similar arguments as in \eqref{eq:est_error_G} along with \eqref{eq:H_err}, we obtain
    \begin {align*}
    \E\left[\left\lvert \rho_g(\theta)- \hat{\rho}_g^H(\theta)\right\rvert^2\right]=\frac{16M_r^2M_{g'}^2M_s^2}{m}.
    \end {align*}
\end{proof}
%%%%%%%%%%%%%%%%%%%%%%%%%%%%%%%%%%%%%%%%%%%%%%%%%%%%%%
\subsection{Lipschitz properties of the DRM and its gradient}
\label{subsec:lip_drm}
\subsubsection{Results related to the distortion function}
The following lemma establishes Lipschitzness of the $g(\cdot)$, and $g'(\cdot)$. We require this result to establish the smoothness of the DRM.
\begin{lemma}
    \label{lm:g_lip}
    $\forall t,t'\in(0,1)$,$\left\lvert g(t) -g(t')\right\rvert \leq M_{g'} \left\lvert t - t'\right\rvert$, and
    $\left\lvert g'(t) -g'(t')\right\rvert \leq  M_{g''}\left\lvert t - t'\right\rvert$.
\end{lemma}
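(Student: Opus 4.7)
The plan is to apply the mean value theorem twice, leveraging the uniform bounds on $g'$ and $g''$ provided by assumption \ref{as:g'_bound}. This is essentially the textbook fact that a function with bounded derivative is Lipschitz with constant equal to the sup-norm of the derivative.

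For the first inequality, fix arbitrary $t,t'\in(0,1)$ with $t\neq t'$, and let $[a,b]$ denote the closed interval with endpoints $t,t'$, so $[a,b]\subset(0,1)$. Since $g''$ exists and is bounded on $(0,1)$ by \ref{as:g'_bound}, $g'$ is continuous on $[a,b]$, hence $g$ is continuously differentiable on $[a,b]$. I would apply the mean value theorem to obtain some $\xi$ strictly between $t$ and $t'$ with $g(t)-g(t')=g'(\xi)(t-t')$. Taking absolute values and invoking $|g'(\xi)|\leq M_{g'}$ from \ref{as:g'_bound} yields $|g(t)-g(t')|\leq M_{g'}|t-t'|$. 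The boundary case $t=t'$ is trivial.

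For the second inequality, I would repeat the argument with $g$ replaced by $g'$ and $g'$ replaced by $g''$: the existence and boundedness of $g''$ on $(0,1)$ from \ref{as:g'_bound} ensures that $g'$ is continuously differentiable, so the mean value theorem gives $g'(t)-g'(t')=g''(\eta)(t-t')$ for some $\eta$ between $t$ and $t'$, and $|g''(\eta)|\leq M_{g''}$ delivers the claim.

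There is no genuine obstacle here; the only subtlety is the implicit regularity requirement, namely that $g'$ is defined on all of $(0,1)$ (which is built into \ref{as:g'_bound} by positing a pointwise bound on $g'(t)$ and $g''(t)$ for every $t\in(0,1)$). Alternatively, one could phrase the same argument via the fundamental theorem of calculus, writing $g(t)-g(t')=\int_{t'}^{t}g'(s)\,ds$ and bounding the integrand by $M_{g'}$, and similarly for $g'$; this avoids the existential step in the MVT and is equally short.
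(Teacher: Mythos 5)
Your proof is correct and follows essentially the same route as the paper: an application of the mean value theorem to $g$ and to $g'$, combined with the uniform bounds $\lvert g'\rvert \leq M_{g'}$ and $\lvert g''\rvert \leq M_{g''}$ from \ref{as:g'_bound}. The additional remarks on regularity and the alternative fundamental-theorem-of-calculus phrasing are fine but not needed beyond what the paper does.
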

\begin{proof}
    Using mean value theorem, we obtain $ g(t) -g(t') = g'(\tilde{t}) (t - t')$, where $\tilde{t} \!\in\! (t,t')$. From \ref{as:g'_bound}, we obtain  $\left\lvert g'(\tilde{t}) \right\rvert \leq M_{g'}, \forall \tilde{t} \in(0,1)$. Hence,
    $\left\lvert g(t) -g(t')\right\rvert \leq M_{g'} \left\lvert t - t'\right\rvert\;\forall t,t'\in(0,1)$.

    Similarly, we have $ g'(t)-g'(t') = g''(\tilde{t}) (t - t')$, where $\tilde{t} \in (t,t')$. From \ref{as:g'_bound}, we obtain  $\left\lvert g''(\tilde{t}) \right\rvert \leq M_{g''}, \forall \tilde{t} \in(0,1)$. Hence,
    $\left\lvert g'(t) -g'(t')\right\rvert \leq M_{g''} \left\lvert t - t'\right\rvert\;\forall t,t'\in(0,1)$.
\end{proof}
%%%%%%%%%%%%%%%%%%%%%%%%%%%%%%%%%%%%%%%%%%%%%%%%%%%%%%%%%%%%%%%%%%%%%%%%%%%%%
\subsubsection{Lipschitz properties of the CDF}
The following two lemmas establish an upper bound for the gradient and the Hessian of the CDF. These lemmas are similar to lemmas in \cite{nv2021}. For the sake of completeness, we provide the detailed proof.
\begin{lemma}
    \label{lm:nablaFG}
    $\forall x \in(-M_r,M_r)$,
    \begin{align*}
        &\nabla F_{R^{\theta}}(x) =\E\left[\1\{R^{\theta} \leq x\}\sum\limits_{t=0}^{T-1}\nabla\log\pi_\theta(A_t | S_t)\right], \textrm{ and}\\
        &\nabla^2 F_{R^{\theta}}(x) =\E\left[\1\{R^{\theta}\leq x\}\left(\sum_{t=0}^{T-1}\nabla^2\log\pi_\theta(A_t | S_t)+
        \left[\sum_{t=0}^{T-1}\nabla\log\pi_\theta(A_t | S_t)\right] \left[\sum_{t=0}^{T-1}\nabla\log\pi_\theta(A_t | S_t)\right]^T\right)\right].
    \end{align*}
\end{lemma}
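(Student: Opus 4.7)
The plan is to apply the log-derivative (score function) trick at the trajectory level. For any policy $\pi_\theta$, view an episode as a random trajectory $\tau = (S_0, A_0, S_1, A_1, \ldots, S_T)$. By \eqref{eq:Me} we have $T \leq M_e$ almost surely, and since $\mathscr{S}$ and $\mathscr{A}$ are finite, the set $\mathcal{T}$ of trajectories with length at most $M_e$ is a \emph{finite} set. Writing $P_\theta(\tau) = \prod_{t=0}^{T-1}\pi_\theta(A_t\mid S_t)\,p(S_{t+1}\mid S_t, A_t)$ and noting that $R^\theta = R(\tau)$ is a deterministic function of $\tau$, we obtain
\begin{IEEEeqnarray*}{l}
F_{R^\theta}(x) = \E\left[\1\{R^\theta \leq x\}\right] = \sum_{\tau \in \mathcal{T}} \1\{R(\tau)\leq x\}\, P_\theta(\tau).
\end{IEEEeqnarray*}

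Because this is a finite sum of smooth functions of $\theta$ (smoothness of $P_\theta(\tau)$ in $\theta$ is inherited from the differentiability of $\pi_\theta$, while the transition kernel $p$ is $\theta$-free), I can exchange $\nabla$ with the sum. Using the standard identity $\nabla P_\theta(\tau) = P_\theta(\tau)\nabla\log P_\theta(\tau)$ together with
\begin{IEEEeqnarray*}{l}
\nabla \log P_\theta(\tau) = \sum_{t=0}^{T-1}\nabla \log \pi_\theta(A_t\mid S_t),
\end{IEEEeqnarray*}
(the transition probabilities drop out), the first formula follows by rewriting the finite sum back as an expectation.

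For the Hessian, I will differentiate $\nabla P_\theta(\tau)$ once more. Using the product rule,
\begin{IEEEeqnarray*}{rCl}
\nabla^2 P_\theta(\tau) &=& \nabla P_\theta(\tau)\bigl[\nabla \log P_\theta(\tau)\bigr]^T + P_\theta(\tau)\nabla^2 \log P_\theta(\tau)\\
&=& P_\theta(\tau)\Bigl(\nabla \log P_\theta(\tau)[\nabla \log P_\theta(\tau)]^T + \nabla^2 \log P_\theta(\tau)\Bigr),
\end{IEEEeqnarray*}
and since $\nabla^2 \log P_\theta(\tau) = \sum_{t=0}^{T-1}\nabla^2 \log \pi_\theta(A_t \mid S_t)$ (again because transition probabilities are $\theta$-free), the second formula is obtained after interchanging $\nabla^2$ with the finite sum and rewriting as an expectation.

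The only place that requires care is the justification for exchanging differentiation and expectation/summation; here this is essentially free because of the almost sure bound $T \leq M_e$ combined with the finiteness of $\mathscr{S}$ and $\mathscr{A}$, which reduces the expectation to a finite sum of functions that are $C^2$ in $\theta$ (the $C^2$ regularity being guaranteed by \ref{as:nabla_logpi}, which bounds $\nabla\log\pi_\theta$ and $\nabla^2\log\pi_\theta$ uniformly). If one prefers to handle the bounded-length-but-still-random setting via dominated convergence, the uniform boundedness of $\nabla\log\pi_\theta$ and $\nabla^2\log\pi_\theta$ in \ref{as:nabla_logpi}, together with $T\leq M_e$, provides an integrable dominating function, so no real obstacle arises.
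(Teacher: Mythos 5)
Your proposal is correct and follows essentially the same route as the paper: both write $F_{R^\theta}(x)$ as a finite sum over trajectories, apply the score-function identities $\nabla P_\theta(\tau)=P_\theta(\tau)\nabla\log P_\theta(\tau)$ and the corresponding second-order identity, and justify the interchange of differentiation and summation via the finiteness of the trajectory set together with the bounds in \ref{as:nabla_logpi}. No substantive differences to report.
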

\begin{proof}
    Let $\Omega$ denote the set of all sample episodes. For any episode $\omega\in\Omega$, we denote by $T(\omega)$, its length, and $S_t(\omega)$ and $A_t(\omega)$, the state and action at time $t\in\{0,1,2,\cdots\}$ respectively. \\
    Let $R(\omega)=\sum\limits_{t=0}^{T(\omega)-1}\gamma^t r(S_t(\omega),A_t(\omega),S_{t+1}(\omega))$ be the cumulative discounted reward of the episode $\omega$, and let\\ $\p_\theta(\omega) =\prod\limits_{t=0}^{T(\omega)-1}\pi_\theta(A_t(\omega)|S_t(\omega))p(S_{t+1}(\omega),S_t(\omega),A_t(\omega))$. \\
    From $\frac{\nabla \p_\theta(\omega)}{\p_\theta(\omega)} =\sum\limits_{t=0}^{T(\omega)-1}\nabla\log\pi_\theta(A_t(\omega) | S_t(\omega))$,
    we obtain
    \begin{align*}
        \nabla F_{R^{\theta}}(x)
        &=\nabla\E\!\left[\1\{R^{\theta}\leq x\}\right] =\nabla \sum_{\omega\in\Omega} \1\{R(\omega)\leq x\}\p_\theta(\omega)\\
        &\stackrel{(a)}{=}\sum_{\omega\in\Omega}\nabla\left( \1\{R(\omega)\leq x\}\p_\theta(\omega)\right)\\
        &\stackrel{(b)}{=}\sum_{\omega\in\Omega} \1\{R(\omega)\leq x\}\nabla \p_\theta(\omega)\\
        &=\sum_{\omega\in\Omega} \1\{R(\omega)\leq x\} \frac{\nabla \p_\theta(\omega)}{\p_\theta(\omega)}\p_\theta(\omega)\\
        &=\sum_{\omega\in\Omega} \1\{R(\omega)\leq x\} \sum_{t=0}^{T(\omega)-1}\nabla\log\pi_\theta(A_t(\omega)|S_t(\omega))\p_\theta(\omega)\\
        &=\E\left[\1\{R^{\theta}\leq x\}\sum_{t=0}^{T-1}\nabla\log\pi_\theta(A_t|S_t)\right].
    \end{align*}
    In the above, the equality in \((a)\) follows by an application of the dominated convergence theorem to interchange the differentiation and the expectation operation. The aforementioned application is allowed since  (i) $\Omega$ is finite and the underlying measure is bounded, as we consider an MDP where the state and actions spaces are finite, and the policies are proper, (ii) $\nabla\log\pi_\theta(A_t|S_t)$ is bounded from \ref{as:nabla_logpi}. The equality in \((b)\) follows, since for a given episode $\omega$, the cumulative reward $R(\omega)$ does not depend on $\theta$.

    Similarly, \\
    from $\frac{\nabla^2 \p_\theta(\omega)}{\p_\theta(\omega)} =\sum\limits_{t=0}^{T(\omega)-1}\nabla^2\log\pi_\theta(A_t(\omega) | S_t(\omega))+
    \left[\sum\limits_{t=0}^{T(\omega)-1}\nabla\log\pi_\theta(A_t(\omega) | S_t(\omega))\right] \left[\sum\limits_{t=0}^{T(\omega)-1}\nabla\log\pi_\theta(A_t(\omega) | S_t(\omega))\right]^T$, we obtain
    \begin{align*}
        &\nabla^2 F_{R^{\theta}}(x)
        =\E\left[\1\{R^{\theta}\leq x\}\left(\sum_{t=0}^{T-1}\!\nabla^2\log\pi_\theta(A_t | S_t)+
        \left[\sum_{t=0}^{T-1}\nabla\log\pi_\theta(A_t | S_t)\right] \left[\sum_{t=0}^{T-1}\nabla\log\pi_\theta(A_t | S_t)\right]^T\right)\right].
    \end{align*}
\end{proof}
%%%%%%%%%%%%%%%%%%%%%%%%%%%%%%%%%%%%%%%%%%%%%%%%%%%%%%%%%%%%%%%%%%%%%%%%%%%%
\begin{lemma}
    \label{lm:nablaF_bound}
    $\forall x \in (-M_r,M_r),\left\lVert \nabla F_{R^{\theta}}(x) \right\rVert \leq M_e M_d$, and
    $\left\lVert \nabla^2 F_{R^{\theta}}(x) \right\rVert\leq M_e M_h +M_e^2M_d^2$.
\end{lemma}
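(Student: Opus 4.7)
The plan is to bound each quantity directly by pushing norms inside the expectations provided by Lemma \ref{lm:nablaFG}, then applying the uniform bounds from \ref{as:nabla_logpi} together with the almost-sure episode-length bound $T \le M_e$ from \eqref{eq:Me}.

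For the gradient bound, I would start from the identity
\[
\nabla F_{R^{\theta}}(x) =\E\!\left[\1\{R^{\theta} \leq x\}\sum\nolimits_{t=0}^{T-1}\nabla\log\pi_\theta(A_t | S_t)\right],
\]
use Jensen's inequality (or simply monotonicity of the norm under expectation together with $\lvert \1\{R^\theta\le x\}\rvert \le 1$), and the triangle inequality on the finite sum to get $\lVert\nabla F_{R^\theta}(x)\rVert \le \E\!\left[\sum_{t=0}^{T-1}\lVert\nabla\log\pi_\theta(A_t|S_t)\rVert\right]$. Applying $\lVert\nabla\log\pi_\theta(A_t|S_t)\rVert \le M_d$ termwise gives $\E[T\,M_d]$, and finally $T\le M_e$ a.s.\ yields $\lVert\nabla F_{R^\theta}(x)\rVert \le M_e M_d$.

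For the Hessian bound, I would apply the same strategy to the expression
\[
\nabla^2 F_{R^{\theta}}(x) =\E\!\left[\1\{R^{\theta}\leq x\}\!\left(\sum_{t=0}^{T-1}\!\nabla^2\log\pi_\theta(A_t | S_t) + \Big[\sum_{t=0}^{T-1}\!\nabla\log\pi_\theta(A_t | S_t)\Big]\Big[\sum_{t=0}^{T-1}\!\nabla\log\pi_\theta(A_t | S_t)\Big]^{\!T}\right)\right].
\]
Taking the operator norm inside and using the triangle inequality on both sums, the first term is bounded by $\E[T M_h] \le M_e M_h$ by \ref{as:nabla_logpi} and \eqref{eq:Me}. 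For the outer-product term, I would use $\lVert v v^T\rVert = \lVert v\rVert^2$ for the chosen matrix norm, and bound $\big\lVert\sum_{t=0}^{T-1}\nabla\log\pi_\theta(A_t|S_t)\big\rVert^2 \le (T M_d)^2 \le M_e^2 M_d^2$ again using the triangle inequality followed by \ref{as:nabla_logpi} and \eqref{eq:Me}. Summing the two contributions gives $\lVert\nabla^2 F_{R^\theta}(x)\rVert \le M_e M_h + M_e^2 M_d^2$.

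No real obstacle is expected: both bounds are straightforward consequences of (i) the representation from Lemma \ref{lm:nablaFG}, (ii) the uniform boundedness of $\nabla\log\pi_\theta$ and $\nabla^2\log\pi_\theta$ in \ref{as:nabla_logpi}, and (iii) the almost-sure finiteness of $T$ in \eqref{eq:Me}. The only minor point worth noting is the implicit choice of a compatible matrix norm for which $\lVert v v^T\rVert = \lVert v\rVert^2$ (e.g.\ the spectral norm); this is consistent with the $\lVert\cdot\rVert$ used throughout the paper.
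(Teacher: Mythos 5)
Your proposal is correct and follows essentially the same route as the paper: establish an almost-sure bound on the integrand in the representation of Lemma \ref{lm:nablaFG} via the triangle inequality, \ref{as:nabla_logpi}, and $T\le M_e$, then push the norm inside the expectation. The paper states the pointwise bounds \eqref{eq:nabla_G_bound}--\eqref{eq:nabla_G_bound1} a bit more tersely, but the underlying steps (including $\lVert vv^T\rVert=\lVert v\rVert^2$ for the spectral norm) are exactly the ones you spell out.
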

\begin{proof}
    From \ref{as:nabla_logpi} and \eqref{eq:M_e}, for any $x\in(-M_r,M_r)$, we have
    \begin{align}
        \label{eq:nabla_G_bound}
        \left\lVert\1\{R^{\theta}\leq x\}\sum_{t=0}^{T-1}\!\nabla\log \pi_{\theta}(A_t\mid S_t)\right\rVert \leq  M_e M_d \textrm{ a.s},
    \end{align}
    and
    \begin{align}
        \label{eq:nabla_G_bound1}
        \left\lVert\1\{R^{\theta}\leq x\}\left(\sum_{t=0}^{T-1}\nabla^2\log\pi_\theta(A_t | S_t)+
        \left[\sum_{t=0}^{T-1}\nabla\log\pi_\theta(A_t | S_t)\right] \left[\sum_{t=0}^{T-1}\nabla\log\pi_\theta(A_t | S_t)\right]^T\right)\right\rVert
        \leq  M_e M_h +M_e^2M_d^2\textrm{ a.s}.
    \end{align}

    From Lemma \ref{lm:nablaFG}, for any $x\in(-M_r,M_r)$, we have
    \begin{align}
        \left\lVert \nabla F_{R^{\theta}}(x) \right\rVert
        &\leq \E\left[\left\lVert \1\{R^{\theta}\leq x\} \sum_{t=0}^{T-1} \nabla\log \pi_{\theta}(A_t| S_t)\right\rVert\right]
        \leq M_e M_d, \label{eq:nabla_F_R_bound}
    \end{align}
    and
    \begin{align}
        \left\lVert \nabla^2 F_{R^{\theta}}(x) \right\rVert
        &\leq \E\left[\left\lVert\1\{R^{\theta}\leq x\}\left(\sum_{t=0}^{T-1}\nabla^2\log\pi_\theta(A_t | S_t)+
        \left[\sum_{t=0}^{T-1}\nabla\log\pi_\theta(A_t | S_t)\right] \left[\sum_{t=0}^{T-1}\nabla\log\pi_\theta(A_t | S_t)\right]^T\right)\right\rVert\right]\nonumber\\
        &\leq M_e M_h +M_e^2M_d^2, \label{eq:nabla2_F_R_bound}
    \end{align}
    where these inequalities follow from \eqref{eq:nabla_G_bound}, \eqref{eq:nabla_G_bound1}, and the assumption that the state and action spaces are finite.
\end{proof}
%%%%%%%%%%%%%%%%%%%%%%%%%%%%%%%%%%%%%%%%%%%%%%%%%%%%%%%%%%%%%%%%%%%%%
The following lemma establishes Lipschitzness of the CDF and its gradient.
\begin{lemma}
    \label{lm:F_lip}
    $\forall x \in(-M_r,M_r)$,
    \begin{align*}
        &\left\lvert F_{R^{\theta_1}}(x) - F_{R^{\theta_2}}(x) \right\rvert \leq M_eM_d \left\lVert \theta_1 - \theta_2 \right\rVert, \textrm{ and}\\
        &\left\lVert \nabla F_{R^{\theta_1}}(x) - \nabla F_{R^{\theta_2}}(x) \right\rVert \leq (M_eM_h+M_e^2M_d^2) \left\lVert \theta_1 - \theta_2 \right\rVert.
    \end{align*}
\end{lemma}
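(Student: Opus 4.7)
The plan is to deduce Lipschitz continuity of $F_{R^{\theta}}(x)$ and $\nabla F_{R^{\theta}}(x)$ (in $\theta$, uniformly over $x\in[-M_r,M_r]$) from the uniform bounds on $\nabla F_{R^{\theta}}(x)$ and $\nabla^2 F_{R^{\theta}}(x)$ that Lemma~\ref{lm:nablaF_bound} already provides. The natural tool is the fundamental theorem of calculus applied along the line segment joining $\theta_1$ and $\theta_2$.

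First, fix $x\in[-M_r,M_r]$ and consider the scalar map $\theta \mapsto F_{R^{\theta}}(x)$. By the fundamental theorem of calculus,
\begin{equation*}
F_{R^{\theta_1}}(x) - F_{R^{\theta_2}}(x) = \int_0^1 \left\langle \nabla F_{R^{\theta_\tau}}(x),\, \theta_1 - \theta_2\right\rangle d\tau,
\end{equation*}
where $\theta_\tau = \theta_2 + \tau(\theta_1 - \theta_2)$. Taking absolute values, applying Cauchy--Schwarz inside the integral, and invoking the bound $\lVert \nabla F_{R^{\theta_\tau}}(x)\rVert \leq M_e M_d$ from Lemma~\ref{lm:nablaF_bound} uniformly in $\tau$, I would obtain the first inequality
\begin{equation*}
\left\lvert F_{R^{\theta_1}}(x) - F_{R^{\theta_2}}(x)\right\rvert \leq M_e M_d \left\lVert \theta_1 - \theta_2\right\rVert.
\end{equation*}

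For the gradient bound, I would apply the same device componentwise to the vector-valued map $\theta \mapsto \nabla F_{R^{\theta}}(x)$:
\begin{equation*}
\nabla F_{R^{\theta_1}}(x) - \nabla F_{R^{\theta_2}}(x) = \int_0^1 \nabla^2 F_{R^{\theta_\tau}}(x)\,(\theta_1 - \theta_2)\, d\tau.
\end{equation*}
Taking norms, pulling the norm inside the integral, using the operator-norm inequality $\lVert A v\rVert \leq \lVert A\rVert \lVert v\rVert$, and then applying the bound $\lVert \nabla^2 F_{R^{\theta_\tau}}(x)\rVert \leq M_e M_h + M_e^2 M_d^2$ from Lemma~\ref{lm:nablaF_bound}, I would obtain the second inequality.

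The only subtlety is the justification for pulling the gradient/Hessian inside the integral and for invoking Lemma~\ref{lm:nablaF_bound} uniformly over the segment $\{\theta_\tau : \tau\in[0,1]\}$; both are immediate since Lemma~\ref{lm:nablaFG} already exchanges $\nabla$ and $\mathbb{E}$ for every $\theta\in\mathbb{R}^d$ (via dominated convergence, which is valid on the whole segment), and the bounds in Lemma~\ref{lm:nablaF_bound} hold uniformly in $\theta$. No part of the argument is difficult; the proof is essentially a one-line application of the mean value theorem in integral form, so I expect no real obstacle beyond bookkeeping.
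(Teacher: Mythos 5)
Your proof is correct and is essentially the same argument as the paper's: the paper simply cites Lemma~1.2.2 of Nesterov's book, which is exactly the integral-form mean value theorem along the segment combined with the uniform gradient/Hessian bounds from Lemma~\ref{lm:nablaF_bound} that you write out explicitly. No meaningful difference in approach.
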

\begin{proof}
    The result follows by Lemma \ref{lm:nablaF_bound} and Lemma~1.2.2 in \cite{nesterov_book}.
\end{proof}
%%%%%%%%%%%%%%%%%%%%%%%%%%%%%%%%%%%%%%%%%%%%%%%%%%%%%%%%%%%%%%%%%%%%%%%%%%%%%
\subsubsection{Gradient of the DRM}
The following lemma derives an expression for the gradient of the DRM. This lemma is similar to Theorem 1 in \cite{nv2021}. For the sake of completeness, we provide detailed proof.
\begin{lemma}
    \label{lm:nabla_rho_g}
    $\nabla \rho_g(\theta)\!=\!-\int_{-M_r}^{M_r} g'(1-F_{R^{\theta}}(x)) \nabla F_{R^{\theta}}(x)dx$.
\end{lemma}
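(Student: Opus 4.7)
The plan is to differentiate the Choquet-integral representation \eqref{eq:rho_g_1} of $\rho_g(\theta)$ by interchanging the gradient with the integral over each of the bounded intervals $[-M_r,0]$ and $[0,M_r]$, and then apply the chain rule on the integrand.

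First, I would fix $x \in [-M_r, M_r]$ and compute $\nabla_\theta\bigl[g(1-F_{R^\theta}(x))\bigr]$ using the chain rule, noting that the derivative of the constant $-1$ (present on $[-M_r,0]$) vanishes. Since $F_{R^\theta}(x)$ is differentiable in $\theta$ by Lemma \ref{lm:nablaFG}, and $g$ is differentiable by \ref{as:g'_bound}, the chain rule yields
\begin{equation*}
\nabla_\theta g(1-F_{R^\theta}(x)) \;=\; -\,g'(1-F_{R^\theta}(x))\,\nabla F_{R^\theta}(x).
\end{equation*}

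Next, I would justify the interchange of $\nabla_\theta$ with the two integrals in \eqref{eq:rho_g_1} by invoking a standard Leibniz/dominated-convergence argument. The required domination is uniform in $x$ and $\theta$: combining \ref{as:g'_bound} (which gives $|g'(\cdot)| \le M_{g'}$) with Lemma \ref{lm:nablaF_bound} (which gives $\lVert \nabla F_{R^\theta}(x) \rVert \le M_e M_d$) yields
\begin{equation*}
\bigl\lVert g'(1-F_{R^\theta}(x))\,\nabla F_{R^\theta}(x) \bigr\rVert \;\le\; M_{g'} M_e M_d,
\end{equation*}
and this constant bound is integrable on the bounded intervals $[-M_r,0]$ and $[0,M_r]$. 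Hence differentiation under the integral sign is valid.

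Finally, I would combine the two resulting integrals over $[-M_r,0]$ and $[0,M_r]$ into a single integral over $[-M_r,M_r]$, obtaining the claimed formula
\begin{equation*}
\nabla \rho_g(\theta) \;=\; -\int_{-M_r}^{M_r} g'(1-F_{R^{\theta}}(x))\, \nabla F_{R^{\theta}}(x)\, dx.
\end{equation*}
The only non-routine step is the domination argument, but with the finite state/action spaces, properness of $\pi_\theta$, and \ref{as:nabla_logpi}-\ref{as:g'_bound} already in hand via Lemma \ref{lm:nablaF_bound}, this is essentially immediate; the rest of the argument is the chain rule on a bounded domain.
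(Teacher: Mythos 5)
Your proposal is correct and follows essentially the same route as the paper: differentiate under the integral sign via a dominated-convergence/Leibniz argument, using the bound $\lvert g'(\cdot)\rvert \le M_{g'}$ from \ref{as:g'_bound} together with $\lVert \nabla F_{R^\theta}(x)\rVert \le M_e M_d$ from Lemma \ref{lm:nablaF_bound}, then apply the chain rule and merge the two integrals. The paper's own proof uses exactly this domination constant and justification, so there is nothing to add.
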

\begin{proof}
    Notice that
    \begin{align*}
        \nabla \rho_g(\theta)
        &=\nabla\int_{-M_r}^{0}\left(g(1-F_{R^{\theta}}(x))-1\right)dx  + \nabla\int_{0}^{M_r} g(1-F_{R^{\theta}}(x))dx \nonumber\\
        &\stackrel{(a)}{=}\int_{-M_r}^{0}\nabla \left(g(1-F_{R^{\theta}}(x))-1\right) dx + \int_{0}^{M_r}\nabla g(1-F_{R^{\theta}}(x)) dx \\%\label{eq:B}\\
        &=-\int_{-M_r}^{M_r} g'(1-F_{R^{\theta}}(x)) \nabla F_{R^{\theta}}(x)dx.\nonumber
    \end{align*}
    In the above, \((a)\) follows by an application of the dominated convergence theorem to interchange the differentiation and the integration operation. The aforementioned application is allowed since
    (i) $\rho_g(\theta)$ is finite for any $\theta \in \R^d$; (ii)  $\lvert g'(\cdot) \rvert \leq M_{g'}$ from \ref{as:g'_bound}, and $\nabla F_{R^{\theta}}(\cdot)$ is bounded from \eqref{eq:nabla_F_R_bound}. The bounds on $g'$ and $\nabla F_{R^{\theta}}$ imply \\$\int_{-M_r}^{M_r} \left\lVert g'(1-F_{R^{\theta}}(x)) \nabla F_{R^{\theta}}(x)\right\rVert dx \leq 2 M_rM_{g'} M_eM_d$.
\end{proof}
%%%%%%%%%%%%%%%%%%%%%%%%%%%%%%%%%%%%%%%%%%%%%%%%%%%%%%%%%%%%%%%%%%%%%%%%%%%%
\subsubsection{Lipschitz properties of the DRM and its gradient}
%%%%%%%%%%%%%%%%%%%%%%%%%%%%%%%%%%%%%%%%%%%%%%%%%%%%%%%%%%%%%%%%%%%%%%%%%%%%%
The following two lemmas establish the Lipschitzness of the DRM and its gradient.
% \begin{lemma}
    %     \label{lm:rho_lip}
    %     $\forall \theta_1,\theta_2 \in \mathbb{R}^d$, $\left\lvert \rho_g(\theta_1)\!-\! \rho_g(\theta_2)\right\rvert
    %     \leq L_\rho \left\lVert \theta_1 \!-\! \theta_2 \right\rVert$, where $L_\rho=2M_rM_{g'}M_eM_d$.
    % \end{lemma}
\begin{proof}(\textbf{Lemma \ref{lm:rho_lip}})
    \begin {align*}
    \left\lvert \rho_g(\theta_1)- \rho_g(\theta_2)\right\rvert
    & \leq  \int_{-M_r}^{M_r}\left\lvert g(1-F_{R^{\theta_1}}(x))-g(1-F_{R^{\theta_2}}(x)) \right\rvert dx \\
    & \stackrel{(a)}{\leq} M_{g'} \int_{-M_r}^{M_r}\left\lvert F_{R^{\theta_1}}(x)-F_{R^{\theta_2}}(x)\right\rvert dx\\
    &\stackrel{(b)}{\leq} 2M_rM_{g'}M_eM_d \left\lVert \theta_1 - \theta_2 \right\rVert,
    \end {align*}
    where \((a)\) follows from Lemma \ref{lm:g_lip} and \((b)\) follows from Lemma \ref{lm:F_lip}. The result follows since $L_\rho=2M_rM_{g'}M_eM_d$.
    %\end{proof}
    %%%%%%%%%%%%%%%%%%%%%%%%%%%%%%%%%%%%%%%%%%%%%%%%%%%%%%%%%%%%%%%%%%%%%%%%%%%%%
    % \begin{lemma}
        %     \label{lm:nabla_rho_lip}
        %     $\forall \theta_1,\theta_2 \in \mathbb{R}^d$,$\left\lVert \nabla\rho_g(\theta_1) - \nabla \rho_g(\theta_2) \right\rVert \leq
        %      L_{\rho'} \left\lVert  \theta_1  \!-\! \theta_2\right\rVert$,\\
        %      \hspace*{\fill}where $L_{\rho'}=2M_r M_e \left( M_h M_{g'}+M_eM_d^2 (M_{g'}+ M_{g''})\right)$.
        % \end{lemma}
    %\begin{proof}(\textbf{Lemma \ref{lm:nabla_rho_lip}})

    From Lemma \ref{lm:nabla_rho_g}, we obtain
    \begin {align*}
    &\left\lVert \nabla\rho_g(\theta_1) - \nabla \rho_g(\theta_2) \right\rVert \\
    &\leq \int_{-M_r}^{M_r}\left\lVert  g'(1-F_{R^{\theta_1}}(x)) \nabla F_{R^{\theta_1}}(x)
    - g'(1-F_{R^{\theta_2}}(x)) \nabla F_{R^{\theta_2}}(x) \right\rVert dx\\
    &\leq \int_{-M_r}^{M_r}\left\lVert  g'(1-F_{R^{\theta_1}}(x)) \nabla F_{R^{\theta_1}}(x)\right.
    -g'(1-F_{R^{\theta_1}}(x)) \nabla F_{R^{\theta_2}}(x) + g'(1-F_{R^{\theta_1}}(x)) \nabla F_{R^{\theta_2}}(x) \\
    &\quad \left.-g'(1-F_{R^{\theta_2}}(x)) \nabla F_{R^{\theta_2}}(x) \right\rVert dx   \\
    &\leq \int_{-M_r}^{M_r}  \left\lvert g'(1-F_{R^{\theta_1}}(x)) \right\rvert  \left\lVert \nabla F_{R^{\theta_1}}(x) - \nabla F_{R^{\theta_2}}(x)\right\rVert
    +\left\lVert\nabla F_{R^{\theta_2}}(x)\right\rVert \left\lvert  g'(1-F_{R^{\theta_1}}(x)) - g'(1-F_{R^{\theta_2}}(x))\right\rvert dx   \\
    %        &\leq \int_{-M_r}^{M_r} M_{g'} \left\lVert \nabla F_{R^{\theta_1}}(x) -  \nabla F_{R^{\theta_2}}(x)\right\rVert \\
    %        &\hfill+M_eM_d \left\lvert  g'(1-F_{R^{\theta_1}}(x))  - g'(1-F_{R^{\theta_2}}(x))\right\rvert dx  \\
    %        &\hfill \text{(from lemmas \ref{lm:g_bound} and \ref{lm:nablaF_bound})} \\
    &\stackrel{(a)}{\leq} \int_{-M_r}^{M_r} M_{g'} \left\lVert \nabla F_{R^{\theta_1}}(x) -  \nabla F_{R^{\theta_2}}(x)\right\rVert
    +M_eM_d M_{g''}\left\lvert F_{R^{\theta_1}}(x) - F_{R^{\theta_2}}(x)\right\rvert dx \\
    &\stackrel{(b)}{\leq} \int_{-M_r}^{M_r} M_{g'} (M_eM_h+M_e^2M_d^2)\left\lVert  \theta_1  - \theta_2\right\rVert
    + M_e^2M_d^2 M_{g''}\left\lVert  \theta_1  - \theta_2\right\rVert dx \\
    &\leq 2M_r M_e \left( M_h M_{g'}+M_eM_d^2 (M_{g'}+ M_{g''})\right) \left\lVert  \theta_1  - \theta_2\right\rVert,
    \end {align*}
    where \((a)\) follows from \ref{as:g'_bound}, and Lemmas \ref{lm:g_lip}, \ref{lm:nablaF_bound}, and \((b)\) follows from Lemma \ref{lm:F_lip}. The result follows since $L_{\rho'}=2M_r M_e \left( M_h M_{g'}+M_eM_d^2 (M_{g'}+ M_{g''})\right)$.
\end{proof}
%%%%%%%%%%%%%%%%%%%%%%%%%%%%%%%%%%%%%%%%%%%%%%%%%%%%%%%%%%%%%%%%%%%%%%%%%%%%%%%%%%

\section{Mean-variance risk measure}
\label{sec:mvrm_proofs}
\subsection{The estimation error of the MVRM}
In the following lemma, we bound the estimation error of the MVRM in an on-policy RL setting.
% \begin{lemma}
    % \label{lm:est_error_mvrm_pi}
    % $\E\left[\left\lvert \hat{\rho}_\lambda^{\pi}(\theta)- \rho_\lambda(\theta) \right\rvert^2\right]
    % \leq \frac{8M_r^2\left(1+\frac{4\lambda^2M_r^2}{m-1}\right)}{m}$
    % \end{lemma}
\begin{proof}(\textbf{Lemma \ref{lm:est_error_mvrm_pi}})
    From \eqref{eq:rho_lambda} and \eqref{eq:hat_rho_lambda_pi}, we obtain
    \begin{align}
        \label{eq:mse_hat_rho_lambda_pi}
        \E\left[\left\lvert \hat{\rho}_\lambda^{\pi}(\theta)- \rho_\lambda(\theta) \right\rvert^2\right]
        &\leq2\E\left[\left\lvert \hat{J}_m^{\pi}(\theta)-J(\theta)\right\rvert^2\right]+2\lambda^2\E\left[\left\lvert V(\theta)-\widehat{V}_m^{\pi}(\theta)
        \right\rvert^2\right]\nonumber\\
        &\leq \frac{8M_r^2}{m}+\frac{32\lambda^2M_r^4}{m}=\frac{8M_r^2+32\lambda^2M_r^4}{m},
    \end{align}
    where the last inequality follows from Theorem 2-3 \cite[chapter V1]{mood74} in conjunction with the fact $\left\lvert R^\theta\right\rvert\leq M_r$ and $m>2$.
\end{proof}
In the following lemma, we bound the estimation error of the MVRM in an off-policy RL setting.
% \begin{lemma}
    % \label{lm:est_error_mvrm_b}
    % $\E\left[\left\lvert \hat{\rho}_\lambda^{b}(\theta)- \rho_\lambda(\theta) \right\rvert^2\right]
    % \leq \frac{8M_r^2\left(1+\frac{4\lambda^2M_r^2}{m-1}\right)}{m}$
    % \end{lemma}
\begin{proof}(\textbf{Lemma \ref{lm:est_error_mvrm_b}})
    From \eqref{eq:rho_lambda} and \eqref{eq:hat_rho_lambda_b}, we obtain
    \begin{align}
        \label{eq:mse_hat_rho_lambda_b}
        \E\left[\left\lvert \hat{\rho}_\lambda^{b}(\theta)- \rho_\lambda(\theta) \right\rvert^2\right]
        &\leq2\E\left[\left\lvert \hat{J}_m^{b}(\theta)-J(\theta)\right\rvert^2\right]+2\lambda^2\E\left[\left\lvert V(\theta)-\widehat{V}_m^{b}(\theta)
        \right\rvert^2\right]\nonumber\\
        &\leq \frac{8M_r^2M_s^2}{m}+\frac{32\lambda^2M_r^4M_s^4}{m}=\frac{8M_r^2M_s^2+32\lambda^2M_r^4M_s^4}{m},
    \end{align}
    where the last inequality follows from Theorem 2-3 \cite[chapter V1]{mood74} in conjunction with the fact $\left\lvert R^b\psi_\theta\right\rvert\leq M_rM_s$, and $m>2$.
\end{proof}
%%%%%%%%%%%%%%%%%%%%%%%%%%%%%%%%%%%%%
\subsection{Lipschitz properties of the MVRM and its gradient}
\label{subsec:lip_mvrm}
% \begin{lemma}
    % \label{lm:lip_rho_lambda}
    % $\forall \theta_1,\theta_2 \in \R^d$,
    %  \begin{align*}
        % & \left\lvert \rho_\lambda(\theta_1)-\rho_\lambda(\theta_1)\right\rvert
        % \leq L_{\rho} \left\lVert \theta_1 - \theta_2 \right\rVert;\;L_{\rho}=M_rM_eM_d+ 3\lambda M_r^2 M_e M_d\\
        % &\left\lVert \nabla \rho_\lambda(\theta_1)-\nabla \rho_\lambda(\theta_1)\right\rVert
        % \leq  L_{\rho'} \left\lVert \theta_1 - \theta_2 \right\rVert;\;L_{\rho'} = M_rM_e\left(M_h+M_eM_d^2\right)+\lambda M_r^2M_e\left(3M_h+5 M_eM_d^2\right)
        % \end{align*}
    % \end{lemma}
\begin{proof}(\textbf{Lemma \ref{lm:lip_rho_lambda}})
    Let $\Omega$ denote the set of all sample episodes. For any episode $\omega\in\Omega$, we denote by $T(\omega)$, its length, and $S_t(\omega)$ and $A_t(\omega)$, the state and action at time $t\in\{0,1,2,\cdots\}$ respectively.

    Let $R(\omega)=\sum\limits_{t=0}^{T(\omega)-1}\gamma^t r(S_t(\omega),A_t(\omega),S_{t+1}(\omega))$ be the cumulative discounted reward of the episode $\omega$, and let \\
    %Let $\omega= (S_0(\omega), A_0(\omega),\ldots,A_{T(\omega)-1}(\omega),S_{T(\omega)}(\omega))$ denote a sample trajectory. Letting
    $\p_\theta(\omega) = \prod\limits_{t=0}^{T(\omega)-1}\pi_\theta(A_t(\omega)|S_t(\omega))p(S_{t+1}(\omega),S_t(\omega),A_t(\omega))$. \\
    From $\frac{\nabla \p_\theta(\omega)}{\p_\theta(\omega)} =\sum\limits_{t=0}^{T(\omega)-1}\!\nabla\log\pi_\theta(A_t(\omega) | S_t(\omega))$,
    we obtain
    \begin{align}
        \label{eq:nabla_J}
        \nabla J(\theta)
        &=\nabla\E\left[R^{\theta}\right] =\nabla \sum_{\omega\in\Omega} R(\omega)\p_\theta(\omega)\nonumber\\
        &\stackrel{(a)}{=}\sum_{\omega\in\Omega}\nabla\left( R(\omega)\p_\theta(\omega)\right)\nonumber\\
        &\stackrel{(b)}{=}\sum_{\omega\in\Omega} R(\omega)\nabla \p_\theta(\omega)\nonumber\\
        &=\sum_{\omega\in\Omega} R(\omega) \frac{\nabla \p_\theta(\omega)}{\p_\theta(\omega)}\p_\theta(\omega)\\
        &=\sum_{\omega\in\Omega} R(\omega) \sum_{t=0}^{T(\omega)-1}\nabla\log\pi_\theta(A_t(\omega)|S_t(\omega))\p_\theta(\omega)\nonumber\\
        &=\E\left[R^{\theta}\sum_{t=0}^{T-1}\nabla\log\pi_\theta(A_t|S_t)\right].
    \end{align}
    In the above, \((a)\) follows by an application of the dominated convergence theorem to interchange the differentiation and the expectation operation. The aforementioned application is allowed since  (i) $\Omega$ is finite and the underlying measure is bounded, as we consider an MDP where the state and actions spaces are finite, and the policies are proper, (ii) $\nabla\log\pi_\theta(A_t|S_t)$ is bounded from \ref{as:nabla_logpi}. The step \((b)\) follows, since for a given episode $\omega$, the cumulative reward $R(\omega)$ does not depend on $\theta$.

    Similarly, \\
    from $\frac{\nabla^2 \p_\theta(\omega)}{\p_\theta(\omega)} =\sum\limits_{t=0}^{T(\omega)-1}\nabla^2\log\pi_\theta(A_t(\omega) | S_t(\omega))+
    \left[\sum\limits_{t=0}^{T(\omega)-1}\nabla\log\pi_\theta(A_t(\omega) | S_t(\omega))\right] \left[\sum\limits_{t=0}^{T(\omega)-1}\nabla\log\pi_\theta(A_t(\omega) | S_t(\omega))\right]^T$, we obtain
    \begin{align}
        \label{eq:nabla2_J}
        &\nabla^2 J(\theta)
        =\E\left[R^{\theta}\left(\sum_{t=0}^{T-1}\nabla^2\log\pi_\theta(A_t | S_t)+
        \left[\sum_{t=0}^{T-1}\nabla\log\pi_\theta(A_t | S_t)\right] \left[\sum_{t=0}^{T-1}\nabla\log\pi_\theta(A_t | S_t)\right]^T\right)\right].
    \end{align}

    Similarly,
    \begin{align}
        \label{eq:nabla_E_R_theta_sq}
        \nabla\E\left[\left(R^{\theta}\right)^2\right]
        & =\nabla \sum_{\omega\in\Omega} R(\omega)^2\p_\theta(\omega)\nonumber\\
        &=\sum_{\omega\in\Omega}\nabla\left( R(\omega)^2\p_\theta(\omega)\right)\\
        &=\sum_{\omega\in\Omega} R(\omega)^2\nabla \p_\theta(\omega)\nonumber\\
        &=\sum_{\omega\in\Omega} R(\omega)^2 \frac{\nabla \p_\theta(\omega)}{\p_\theta(\omega)}\p_\theta(\omega)\\
        &=\sum_{\omega\in\Omega} R(\omega)^2 \sum_{t=0}^{T(\omega)-1}\nabla\log\pi_\theta(A_t(\omega)|S_t(\omega))\p_\theta(\omega)\nonumber\\
        &=\E\left[\left(R^{\theta}\right)^2\sum_{t=0}^{T-1}\nabla\log\pi_\theta(A_t|S_t)\right],
    \end{align}
    and
    \begin{align}
        \label{eq:nabla2_E_R_theta_sq}
        &\nabla^2 \E\left[\left(R^{\theta}\right)^2\right]
        =\E\left[\left(R^{\theta}\right)^2\left(\sum_{t=0}^{T-1}\!\nabla^2\log\pi_\theta(A_t | S_t)+
        \left[\sum_{t=0}^{T-1}\!\nabla\log\pi_\theta(A_t | S_t)\right] \left[\sum_{t=0}^{T-1}\!\nabla\log\pi_\theta(A_t | S_t)\right]^T\right)\right].
    \end{align}
    From \eqref{eq:nabla_J}-\eqref{eq:nabla2_J}, we obtain
    \begin{align}
        \label{eq:norm_nabla_J}
        \left\lVert\nabla J(\theta)\right\rVert
        &\leq\E\left[ \left\lVert R^{\theta}\sum_{t=0}^{T-1}\nabla\log\pi_\theta(A_t|S_t)\right\rVert\right]
        \leq M_r E\left[ \left\lVert \sum_{t=0}^{T-1}\nabla\log\pi_\theta(A_t|S_t)\right\rVert\right]
        \leq M_r M_eM_d,
    \end{align}
    and
    \begin{align}
        \label{eq:norm_nabla2_J}
        \left\lVert\nabla^2 J(\theta)\right\rVert
        &\leq\E\left[\left\lVert R^{\theta}\left(\sum_{t=0}^{T-1}\nabla^2\log\pi_\theta(A_t | S_t)+
        \left[\sum_{t=0}^{T-1}\nabla\log\pi_\theta(A_t | S_t)\right] \left[\sum_{t=0}^{T-1}\nabla\log\pi_\theta(A_t | S_t)\right]^T\right)\right\rVert\right]\nonumber\\
        &\leq M_r\E\left[\left\lVert\sum_{t=0}^{T-1}\nabla^2\log\pi_\theta(A_t | S_t)\right\rVert+
        \left\lVert\sum_{t=0}^{T-1}\nabla\log\pi_\theta(A_t | S_t)\right\rVert^2\right]\nonumber\\
        &\leq M_r\left(M_eM_h+M_e^2M_d^2\right).
    \end{align}
    Hence from \eqref{eq:norm_nabla2_J} and Lemma~1.2.2 in \cite{nesterov_book}, we obtain
    \begin{align}
        \label{eq:lip_nabla2_J}
        \left\lVert \nabla J(\theta_1)-\nabla J(\theta_2)\right\rVert
        &\leq M_r\left(M_eM_h+M_e^2M_d^2\right) \left\lVert \theta_1 - \theta_2 \right\rVert
    \end{align}
    Similarly, from \eqref{eq:nabla_E_R_theta_sq}-\eqref{eq:nabla2_E_R_theta_sq}, we obtain
    \begin{align}
        \label{eq:norm_nabla_E_R_theta_sq}
        \left\lVert \nabla\E\left[\left(R^{\theta}\right)^2\right] \right\rVert
        &\leq M_r^2\E\left[\left\lVert\sum_{t=0}^{T-1}\nabla\log\pi_\theta(A_t|S_t)\right\rVert\right]
        \leq M_r^2M_eM_d,
    \end{align}
    and
    \begin{align}
        \label{eq:norm_nabla2_E_R_theta_sq}
        &\left\lVert\nabla^2 \E\left[\left(R^{\theta}\right)^2\right]\right\rVert
        \leq M_r^2\E\left[\left\rVert\sum_{t=0}^{T-1}\nabla^2\log\pi_\theta(A_t | S_t)\right\rVert+
        \left\lVert\sum_{t=0}^{T-1}\nabla\log\pi_\theta(A_t | S_t)\right\rVert^2\right]
        \leq M_r^2\left(M_eM_h+M_e^2M_d^2\right).
    \end{align}
    Now,
    \begin{align}
        \label{eq:norm_nabla2_V}
        \left\lVert \nabla^2 V(\theta)\right\rVert
        &=\left\lVert \nabla^2 \left(\E\left[\left(R^{\theta}\right)^2\right]-J(\theta)^2\right)\right\rVert\nonumber\\
        &=\left\lVert \nabla^2 \E\left[\left(R^{\theta}\right)^2\right]-2J(\theta) \nabla^2 J(\theta)-2\nabla J(\theta) \nabla J(\theta)^\top\right\rVert\nonumber\\
        &\leq\left\lVert \nabla^2 \E\left[\left(R^{\theta}\right)^2\right]\right\rVert+2\left\lvert J(\theta)\right\rvert \left\lVert\nabla^2 J(\theta)\right\rVert+2\left\lVert\nabla J(\theta)\right\rVert^2\nonumber\\
        &\leq 3M_r^2M_eM_h+5 M_r^2M_e^2M_d^2.
    \end{align}
    Hence, from \eqref{eq:norm_nabla2_V} and Lemma~1.2.2 in \cite{nesterov_book}, we obtain
    \begin{align}
        \label{eq:lip_nabla_V}
        \left\lVert \nabla V(\theta_1)-\nabla V(\theta_2)\right\rVert
        &\leq \lambda \left(3M_r^2M_eM_h+5 M_r^2M_e^2M_d^2\right) \left\lVert \theta_1 - \theta_2 \right\rVert
    \end{align}
    % \begin{align}
        % \label{eq:nabla_rho_lambda}
        % &\nabla \rho_\lambda(\theta)=
        % \nabla J(\theta)-\lambda\nabla V(\theta)\nonumber\\
        % &=\nabla J(\theta)-\lambda\nabla \left(\E\left[\left(R^{\theta}\right)^2\right]-J(\theta)^2\right)\nonumber\\
        % &=\nabla J(\theta)-\lambda\nabla \E\left[\left(R^{\theta}\right)^2\right]+2\lambda J(\theta) \nabla J(\theta)\nonumber\\
        % &=\left( 1+2\lambda J(\theta)\right)\nabla J(\theta)-\lambda\nabla \E\left[\left(R^{\theta}\right)^2\right]\nonumber\\
        % &=\left(1+2\lambda \E\left[R^{\theta}\right]\right)\E\left[R^{\theta}\sum_{t=0}^{T-1}\nabla\log\pi_\theta(A_t|S_t)\right]-\lambda \E\left[\left(R^{\theta}\right)^2\sum_{t=0}^{T-1}\nabla\log\pi_\theta(A_t|S_t)\right]
        % \end{align}
    Now,
    \begin{align}
        \label{eq:norm_nabla_rho_lambda}
        \left\lVert \nabla \rho_\lambda(\theta)\right\rVert&=
        \left\lVert \nabla J(\theta)-\lambda\nabla V(\theta)\right\rVert\nonumber\\
        &\leq\left\lVert\nabla J(\theta)\right\rVert+\lambda\left\lVert\nabla \E\left[\left(R^{\theta}\right)^2\right]\right\rVert+2\lambda \left\lvert J(\theta) \right\rvert \left\lVert\nabla J(\theta)\right\rVert\nonumber\\
        &\leq M_rM_eM_d+ 3\lambda M_r^2 M_e M_d.
    \end{align}
    Hence, from \eqref{eq:norm_nabla_rho_lambda} and Lemma~1.2.2 in \cite{nesterov_book}, we obtain
    \begin{align}
        \label{eq:lip_rho_lambda}
        \left\lvert \rho_\lambda(\theta_1)-\rho_\lambda(\theta_1)\right\rvert
        &\leq \left(M_rM_eM_d+ 3\lambda M_r^2 M_e M_d\right)  \left\lVert \theta_1 - \theta_2 \right\rVert.
    \end{align}
    From \eqref{eq:lip_nabla2_J} and \eqref{eq:lip_nabla_V}, we obtain
    \begin{align}
        \label{eq:lip_nabla_rho_lambda}
        \left\lVert \nabla \rho_\lambda(\theta_1)-\nabla \rho_\lambda(\theta_1)\right\rVert &\leq
        \left\lVert \nabla J(\theta_1)-\nabla J(\theta_2) \right\rVert+ \lambda\left\lVert \nabla V(\theta_2)-\nabla V(\theta_1)\right\rVert\nonumber\\
        &\leq \left( M_rM_e\left(M_h+M_eM_d^2\right)+\lambda M_r^2M_e\left(3M_h+5 M_eM_d^2\right)\right) \left\lVert \theta_1 - \theta_2 \right\rVert.
    \end{align}
\end{proof}

%\bibliography{vijayan_677}

%\end{document}

\end{document}